\documentclass{article} 
\usepackage{iclr2026_conference,times}


\usepackage{amsmath,amsfonts,bm}









\def\eqref#1{equation~\ref{#1}}









\def\1{\bm{1}}










\DeclareMathAlphabet{\mathsfit}{\encodingdefault}{\sfdefault}{m}{sl}
\SetMathAlphabet{\mathsfit}{bold}{\encodingdefault}{\sfdefault}{bx}{n}













\usepackage{hyperref}
\usepackage{url}

\usepackage[T1]{fontenc} 

\usepackage{booktabs}   
\usepackage{amsfonts,amsmath,amssymb,amsthm}  
\usepackage{graphicx}
\usepackage{wrapfig}
\usepackage{bbm}
\usepackage{colortbl,xcolor,bbm}  
\usepackage{multirow}
\usepackage{algorithm}
\usepackage{algorithmic}

\newtheorem{theorem}{Theorem}

\newtheorem{proposition}{Proposition}

\newtheorem{definition}{Definition}

\newtheorem{remark}{Remark}

\newcommand{\bA}{\mathbf{A}}

\newcommand{\black}{\color{black}}

\definecolor{lightred}{rgb}{1.0,0.90,0.90}

\title{FSD-CAP: Fractional Subgraph Diffusion with Class-Aware Propagation for Graph Feature Imputation}

\author{
\parbox{\linewidth}{
Xin Qiao$^{1}$\thanks{Xin Qiao, Shijie Sun, and Anqi Dong contributed equally to this manuscript.},
Shijie Sun$^{1}$\footnotemark[1],
Anqi Dong$^{2}$\footnotemark[1],
Cong Hua$^{1}$,
Xia Zhao$^{1}$,
Longfei Zhang$^{1}$,
Guangming Zhu$^{1}$,
Liang Zhang$^{1}$\thanks{Corresponding author. Email: \texttt{liangzhang@xidian.edu.cn}. Accepted by ICLR 2026.}
}
\\[0.9em]
$^{1}$Xidian University, Xi'an, China \\
$^{2}$KTH Royal Institute of Technology, Stockholm, Sweden
}

\iclrfinalcopy
\begin{document}

\maketitle

\begin{abstract}
Imputing missing node features in graphs is challenging, particularly under high missing rates. Existing methods based on latent representations or global diffusion often fail to produce reliable estimates, and may propagate errors across the graph. We propose FSD-CAP, a two-stage framework designed to improve imputation quality under extreme sparsity. In the first stage, a graph-distance-guided subgraph expansion localizes the diffusion process. A fractional diffusion operator adjusts propagation sharpness based on local structure. In the second stage, imputed features are refined using class-aware propagation, which incorporates pseudo-labels and neighborhood entropy to promote consistency. We evaluated FSD-CAP on multiple datasets. With $99.5\%$ of features missing across five benchmark datasets, FSD-CAP achieves average accuracies of $80.06\%$ (structural) and $81.01\%$ (uniform) in node classification, close to the $81.31\%$ achieved by a standard GCN with full features. For link prediction under the same setting, it reaches AUC scores of $91.65\%$ (structural) and $92.41\%$ (uniform), compared to $95.06\%$ for the fully observed case. Furthermore, FSD-CAP demonstrates superior performance on both large-scale and heterophily datasets when compared to other models. Codes are available at \href{https://github.com/ssjcode/FSD-CAP}{https://github.com/ssjcode/FSD-CAP}.
\end{abstract}

\section{Introduction}
Graph Neural Networks (GNNs) are widely used for learning from graph-structured data, with successful applications in social networks \citep{bian2020rumor}, biology \citep{li2022graph}, and recommendation systems \citep{he2020lightgcn}. GNN architectures\citep{chen2023node, chien2020adaptive} always assume nodal features are fully observed, allowing information to be aggregated effectively from neighboring nodes. In practice, this assumption often fails. Node attributes are frequently missing due to privacy constraints, sensor failures, or incomplete data collection. High missing rates disrupt the message-passing process and significantly degrade model performance.

A variety of methods have been proposed for imputing missing features, including statistical estimators \citep{srebro2004maximum}, machine learning models \citep{chen2016xgboost}, and generative approaches \citep{vincent2008extracting}. Recent work has shifted toward deep learning techniques that model the distribution of node attributes. These include latent space models that align observed features with learned embeddings \citep{chen2020learning, yoo2022accurate}, and GNN-based architectures designed to operate on incomplete inputs \citep{taguchi2021graph}. These approaches, which rely on correlations in both feature and graph structure, are effective under moderate missing rates but experience significant performance degradation as sparsity increases, ultimately falling below simple baselines like zero-filling or mean imputation in highly incomplete settings\citep{you2020handling}. 

An alternative class of methods, based on diffusion, propagates observed features across the graph under the assumption of node homophily \citep{rossi2022unreasonable, um2023confidence, wang2024towards}. These methods are typically lightweight, parameter-free, and more robust under high missing rates. However, most diffusion approaches apply uniform propagation across all nodes, without accounting for local structure or propagation order. As a result, nearby reliable signals may be underused, particularly in sparse or large-scale graphs. Additionally, these methods often ignore variation in connectivity and feature distribution, leading to over-smoothing and reduced discriminative power in the imputed features.

We propose Fractional Subgraph Diffusion with Class-Aware Propagation (FSD-CAP), a diffusion-based framework for imputing missing features on graphs. The method is designed to improve robustness under feature sparsity and adapt to local structural variation. FSD-CAP consists of three components. First, a fractional diffusion operator modulates the sharpness of propagation based on local graph structure. This operator generalizes standard normalization by interpolating between uniform averaging and dominant-neighbor selection, allowing it to adapt to varied connectivity. Second, to reduce error accumulation from global diffusion, we introduce a graph-distance-guided subgraph expansion strategy. This mechanism begins with observed nodes and progressively includes less certain regions, enabling early reliable estimates and improved stability. Third, a class-aware refinement step uses pseudo-labels and neighborhood entropy to enhance the imputed features, promoting intra-class consistency and inter-class separation. In the tasks of semi-supervised node classification and link prediction on five benchmark datasets, this method consistently outperforms state-of-the-art imputation approaches and maintains robustness across a wide range of missing rates. Notably, on datasets such as CiteSeer and PubMed, FSD-CAP exhibits higher performance under extreme missing conditions compared to when utilizing fully observed features, thereby demonstrating its effectiveness in sparse scenarios. Furthermore, when compared to other models, FSD-CAP also achieves superior performance on two large-scale datasets and four heterophilous datasets, showcasing its strong adaptability (Appendix~\ref{sec:large dataset} and Appendix~\ref{sec:heterophily datasets}).


\begin{figure}[t]
    \centering
    \includegraphics[width=1\linewidth]{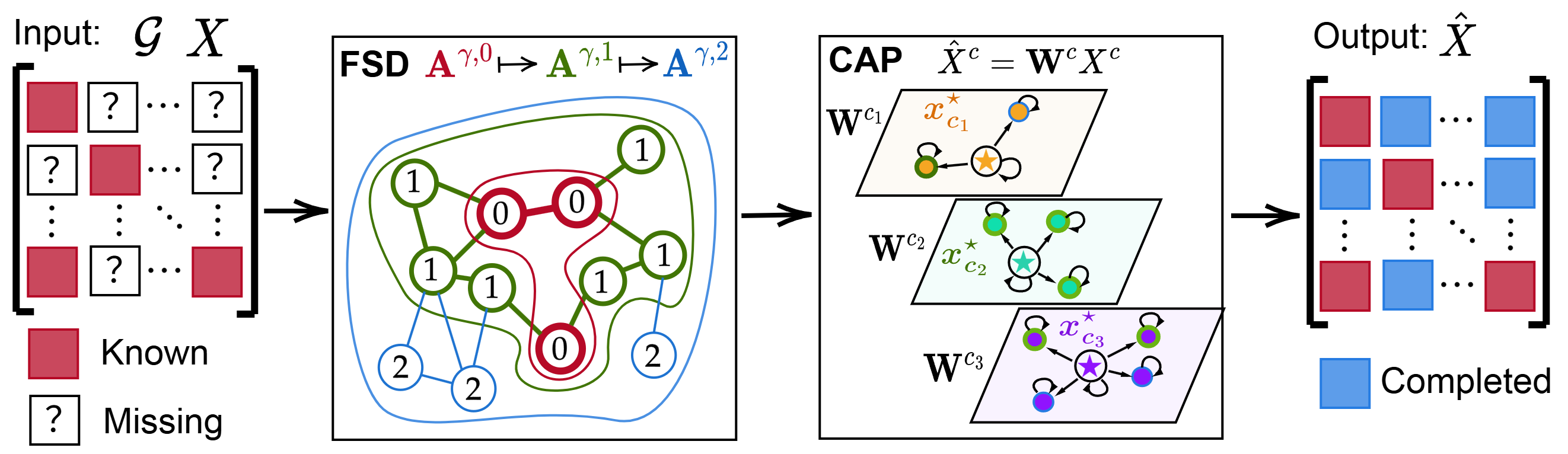}
    \caption{FSD-CAP Pipeline: Given graph $\mathcal G$ and partially observed feature matrix $X$, FSD-CAP recovers the full matrix $\hat{X}$. {\bf (i) FSD:} Starting from observed nodes, it gradually expands the radius of the subgraph and performs progressive subgraph diffusion using fractional diffusion operators $\mathbf A^{\gamma,0}$, $\mathbf A^{\gamma,1}$, $\mathbf A^{\gamma,2}$, producing a preliminary imputed feature matrix. {\bf (ii) CAP:} Based on the FSD-imputed features, pseudo-labels are assigned by a classifier to form class-wise graphs, each associated with class-specific $X^c$ and $\mathbf W^c$. Feature propagation within each class graph yields the final output $\hat{X}$.}
    \label{fig:1}
    \vspace{-0.1in}
\end{figure}

\section{Proposed Method}\label{sec:Proposed Method}
\subsection{Preliminaries and overview}

Our objective is to learn from graphs with incomplete node features, focusing on two common settings: \emph{structural missing}, where some nodes have no observed features, and \emph{uniform missing}, where entries are randomly missing across the feature matrix. The goal is to impute missing attributes in a way that enables robust representation learning, especially under high missing rates where standard GNNs fail. To this end, we propose a two-stage imputation framework designed to stabilize learning and adapt to local graph structure. The overall architecture is shown in Figure~\ref{fig:1} and consists of the following three key components.

\textbf{(i) Fractional diffusion operator.} We generalize the standard diffusion matrix \citep{gasteiger2019diffusion} by introducing a fractional exponent that controls the sharpness of propagation (Section~\ref{sec:frac}). This operator interpolates between uniform averaging and nearest-neighbor routing, allowing information to diffuse adaptively according to local graph structure and thus emphasizes \emph{reliable signals} and mitigates \emph{over-smoothing}.

\textbf{(ii) Progressive subgraph diffusion.} We propagate information through a structured, layer-wise expansion from observed to unobserved nodes (Section~\ref{sec:progressive}), instead of global diffusion \citep{rossi2022unreasonable}. At each step, only a localized subgraph is updated, thereby reducing error accumulation and improving stability. It prioritizes \emph{easy-to-complete} nodes in early stages and gradually expands to uncertain regions.

\textbf{(iii) Class-level feature refinement.} We construct synthetic class-level features using pseudo-labels and neighborhood entropy (Section~\ref{sec:class-level}) to improve the discriminability of features. These signals are propagated within intra-class graphs to refine feature estimates and improve semantic consistency. This step promotes \emph{intra-class coherence} and preserves \emph{inter-class distinctiveness}, reducing over-smoothing across class boundaries.

Before presenting the technical details, we introduce the notation used throughout this section. Let $\mathcal{G} = (\mathcal{V}, \mathcal{E})$ denote an undirected graph, where $\mathcal{V} = \{v_1, v_2, \dots, v_N\}$ is the set of $N$ nodes and $\mathcal{E} \subseteq \mathcal{V} \times \mathcal{V}$ is the set of edges. Each node $v_k$ is associated with a feature vector in $\mathbb{R}^F$, and we collect all node features in a matrix $X \in \mathbb{R}^{N \times F}$, where $X_{[k,:]}$ denotes the feature vector of node $v_k$. The topology of the graph is represented by an adjacency matrix $A \in \{0,1\}^{N \times N}$, where $A_{ij} = 1$ if $(v_i, v_j) \in \mathcal{E}$ and $A_{ij} = 0$ otherwise. The corresponding degree matrix $D \in \mathbb{R}^{N \times N}$ is diagonal, with entries $D_{ii} = \sum_{j=1}^N A_{ij}$. To represent missing features, we define a binary mask matrix $M \in \{0,1\}^{N \times F}$, where $M_{k\ell} = 1$ if the $\ell$-th feature of node $v_k$ is observed, and $M_{k\ell} = 0$ otherwise. Proofs of theoretical results in this section are included in the supplementary material.

\subsection{Fractional diffusion operator}\label{sec:frac}

Diffusion-based imputation methods typically rely on the symmetrically normalized adjacency matrix $\bA = D^{-1/2} A D^{-1/2}$, which defines a lazy random walk over the graph \citep{rossi2022unreasonable, malitesta2024dealing, chen2016robust,dongnegative,dong2025data}. This operator assumes uniform mixing across neighbors, failing to account for differences in node distributions \citep{ji2023leveraging}, which may result in over-smoothing and diminished feature discriminability. To address this, we introduce a fractional diffusion operator that adjusts the propagation behavior using a tunable sharpness parameter $\gamma > 0$. \textit{The key idea is to amplify or suppress the relative influence of neighboring nodes through elementwise exponentiation followed by row normalization.} Specifically, the fractional diffusion matrix $\bA^\gamma \in \mathbb{R}^{N \times N}$ is defined as
\begin{align} \label{eq:element-adj}
\bA^\gamma_{ij} := (\bA_{ij})^\gamma/ \Big(\sum_{k=1}^{N} (\bA_{ik})^\gamma \Big).
\end{align}
This transformation preserves the row-stochastic property of $\bA$ while reweighting neighbor contributions based on edge strength. For $\gamma < 1$, weaker edges are amplified, resulting in smoother, more uniform propagation. For $\gamma > 1$, stronger edges are emphasized, leading to sharper, more localized diffusion. The standard normalized diffusion is recovered when $\gamma = 1$.

\begin{proposition}[\bf Limiting behavior of $\bA^\gamma$]\label{prop:limit}
Let $\bA$ be the symmetric normalized adjacency matrix of a connected graph, and $\bA^\gamma$ as in \eqref{eq:element-adj}. We have
\begin{align}
\lim_{\gamma \to 0^+} \bA^\gamma_{ij} &=
\begin{cases}
\frac{1}{|\mathcal{N}(i)|} & \mathrm{if } \ \ \bA_{ij} > 0\\
0 & \mathrm{otherwise}
\end{cases} \quad \mbox{ and } \quad 
\lim_{\gamma \to \infty} \bA^\gamma_{ij} =
\begin{cases}
1 & \mathrm{if } \ \ j \in \arg\max_k \bA_{ik}\\
0 & \mathrm{otherwise}
\end{cases}.
\end{align}
where $\mathcal{N}(i) = \{j \mid \bA_{ij} > 0\}$ denotes the set of neighbors of node $i$.
\end{proposition}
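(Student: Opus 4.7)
The plan is to handle the two limits independently via elementary real analysis on the row-normalization formula \eqref{eq:element-adj}, exploiting (a) continuity of $x \mapsto x^\gamma$ on $(0,\infty)$ and the convention $0^\gamma = 0$ for $\gamma>0$, and (b) the fact that connectivity guarantees $|\mathcal{N}(i)| \geq 1$, so the denominator never vanishes.

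For the small-$\gamma$ limit, I would first fix a node $i$ and split the sum in the denominator of \eqref{eq:element-adj} over $\mathcal{N}(i)$ and its complement. For $j \in \mathcal{N}(i)$, $\bA_{ij} > 0$, so $(\bA_{ij})^\gamma \to 1$ as $\gamma \to 0^+$; for $j \notin \mathcal{N}(i)$, $(\bA_{ij})^\gamma = 0$ throughout. Thus the denominator tends to $|\mathcal{N}(i)|$ and the numerator tends to $\1[j \in \mathcal{N}(i)]$, giving the claimed limit after division. This step is purely a pointwise limit with a finite sum, so no uniform control is required.

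For the large-$\gamma$ limit, I would factor out the maximum: let $m_i := \max_k \bA_{ik}$ and $S_i := \arg\max_k \bA_{ik}$, and rewrite
\begin{equation}
\bA^\gamma_{ij} \;=\; \frac{(\bA_{ij}/m_i)^\gamma}{\sum_{k=1}^N (\bA_{ik}/m_i)^\gamma}.
\end{equation}
Each ratio $\bA_{ik}/m_i$ lies in $[0,1]$, equals $1$ iff $k \in S_i$, and is strictly less than $1$ otherwise, so $(\bA_{ik}/m_i)^\gamma \to 0$ for $k \notin S_i$ and equals $1$ for $k \in S_i$. Hence the denominator tends to $|S_i|$, the numerator tends to $\1[j \in S_i]$, and the ratio converges to $\1[j \in S_i]/|S_i|$.

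The main obstacle — and really the only subtlety — is that the statement as written asserts the limit equals $1$ for every $j \in \arg\max_k \bA_{ik}$, whereas the row-stochasticity of $\bA^\gamma$ forces the limit to be $1/|S_i|$ when the argmax is not unique. I would therefore note explicitly that the statement must be interpreted with uniform mass on $S_i$, which agrees with the stated form precisely when the maximum is attained uniquely (the generic case, since for the symmetric normalization $\bA_{ij} = (d_i d_j)^{-1/2}$ uniqueness holds whenever node $i$'s neighbors have pairwise distinct degrees). Apart from this bookkeeping, both limits are immediate from the factored form.
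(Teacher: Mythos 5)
Your argument for the $\gamma \to 0^+$ limit coincides with the paper's proof: pointwise convergence $(\bA_{ij})^\gamma \to 1$ for $j \in \mathcal{N}(i)$, the convention $0^\gamma = 0$ off the neighborhood, and a finite sum in the denominator, so nothing more is needed there. For the $\gamma \to \infty$ limit your route is genuinely different and strictly more rigorous. The paper's proof merely asserts that the largest entry ``dominates'' the denominator and that the row ``approaches a one-hot distribution'' at some chosen $j^\star \in \arg\max_k \bA_{ik}$, silently assuming the maximizer is unique. Your normalization trick --- dividing numerator and denominator by $m_i^\gamma$ with $m_i = \max_k \bA_{ik}$, so that each ratio $(\bA_{ik}/m_i)^\gamma$ either equals $1$ (for $k$ in the argmax set $S_i$) or decays to $0$ --- turns that dominance claim into an actual computation, and it exposes a real defect in the proposition as stated: when $|S_i| \ge 2$ the limit is $1/|S_i|$ at each maximizer, not $1$. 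This must be so, since every $\bA^\gamma$ is row-stochastic and a limit row summing to $|S_i| > 1$ is impossible; the paper's own proof inherits the same flaw. Identifying this, together with the correct general limit $\mathbbm{1}[j \in S_i]/|S_i|$, is the main added value of your proof. One caveat on your closing remark: since $\bA_{ij} = (d_i d_j)^{-1/2}$ on edges, uniqueness of the maximizer means node $i$ has a unique neighbor of minimal degree, and in unweighted graphs degree ties among neighbors are common rather than exceptional, so calling uniqueness ``the generic case'' is optimistic. The honest repair is either to add uniqueness as an explicit hypothesis or to restate the limit in the uniform-on-$S_i$ form you derived.
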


Proposition \ref{prop:limit} demonstrates that $\gamma$ acts as a locality parameter, i.e., small values encourage broad, uniform mixing across neighbors, while large values concentrate diffusion along the most prominent edge, resulting in highly localized propagation.

\begin{remark}[\bf Super-diffusion and nearest-neighbor routing]
For $\gamma > 1$, the diffusion process enters a \emph{super-diffusion} regime, where high-weight edges exert disproportionately strong influence. As $\gamma$ increases, the row weights in $\bA^\gamma$ become increasingly concentrated around the largest entry, leading to highly localized propagation. In the limit $\gamma \to \infty$, the process reduces to deterministic routing, where each node transfers its mass entirely to its strongest neighbor.
\end{remark}

The effect of the fractional exponent $\gamma$ on the propagated features $X^{\gamma}:= \bA^\gamma X$, where $X$ is the input feature matrix, is formalized in Theorem~\ref{thm:fractional diffusion}.

\begin{theorem}[\bf Fractional diffusion on feature propagation]
\label{thm:fractional diffusion}
Let $\bA$ and $\bA^\gamma$ denote the lazy transition and fractional diffusion operator as defined in \eqref{eq:element-adj}, respectively. For any feature matrix $X \in \mathbb{R}^{N \times F}$, define the propagated features by $X^{\gamma} := \bA^\gamma X$. Then for each node $i$, we have
\begin{align*}
\lim_{\gamma \to 0^+} X^{\gamma}_{[i,:]} = \frac{1}{|\mathcal{N}(i)|} \sum_{j \in \mathcal{N}(i)} X_{[j,:]} \quad \mbox{ and } \quad
\lim_{\gamma \to \infty} X^{\gamma}_{[i,:]} = X_{[j^\star,:]},
\end{align*}
where $\mathcal{N}(i) = \{j \mid \bA_{ij} > 0\}$ the neighborhood of $v_i$, and $j^\star \in \arg\max_k \bA_{ik}$ is the index of its strongest neighbor.
\end{theorem}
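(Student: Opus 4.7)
The plan is to reduce the statement directly to Proposition~\ref{prop:limit}, since that proposition already performs all the entrywise analytic work; what remains is to pass the entrywise limits through the finite linear combination that defines a single row of $\bA^\gamma X$. The key observation is that the matrix product $X^\gamma = \bA^\gamma X$ has only $N$ terms in each row, so nothing stronger than continuity of finite sums is needed to interchange the limit with summation.

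Concretely, I would first expand the $i$-th row of the propagated features as a $\gamma$-dependent linear combination
\begin{align*}
X^{\gamma}_{[i,:]} \;=\; \sum_{j=1}^{N} \bA^{\gamma}_{ij}\, X_{[j,:]},
\end{align*}
noting that the vectors $X_{[j,:]} \in \mathbb{R}^F$ are fixed, independent of $\gamma$, and the sum has a fixed number of terms. Since finite linear combinations are continuous in their coefficients, for any admissible limit point $\gamma_0 \in \{0^+, \infty\}$,
\begin{align*}
\lim_{\gamma \to \gamma_0} X^{\gamma}_{[i,:]} \;=\; \sum_{j=1}^{N} \Big(\lim_{\gamma \to \gamma_0} \bA^{\gamma}_{ij}\Big)\, X_{[j,:]}.
\end{align*}
Substituting the two limits supplied by Proposition~\ref{prop:limit} then gives both claims: in the $\gamma \to 0^+$ case only indices $j \in \mathcal{N}(i)$ survive, each with weight $1/|\mathcal{N}(i)|$, producing the neighborhood average; in the $\gamma \to \infty$ case only $j^\star \in \arg\max_k \bA_{ik}$ contributes with weight $1$, producing $X_{[j^\star,:]}$.

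The main (and essentially only) subtlety is the uniqueness of the strongest neighbor in the $\gamma \to \infty$ case. If $\arg\max_k \bA_{ik}$ contains more than one index, then the same finite-sum argument yields a uniform average of $X_{[j,:]}$ over all maximizers, rather than a single feature vector; the clean form stated in the theorem corresponds to the generic case where $j^\star$ is unique (e.g.\ when the relevant neighbors have distinct degrees in $\bA = D^{-1/2} A D^{-1/2}$), and one would state this mild genericity assumption explicitly. Everything else is a routine interchange of limits and finite sums, so I do not anticipate any further obstacles beyond cleanly invoking Proposition~\ref{prop:limit}.
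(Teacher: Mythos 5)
Your proposal is correct and follows essentially the same route as the paper's own proof: expand $X^{\gamma}_{[i,:]} = \sum_{j=1}^{N} \bA^{\gamma}_{ij} X_{[j,:]}$ and pass the entrywise limits of Proposition~\ref{prop:limit} through the finite sum. Your added remark about non-uniqueness of $\arg\max_k \bA_{ik}$ (ties yielding an average over maximizers) is a genuine refinement that the paper silently glosses over, and stating that genericity caveat explicitly would in fact make the theorem's $\gamma \to \infty$ claim more precise than the published version.
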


\subsection{Progressive subgraph diffusion}\label{sec:progressive}

In graph learning, nodes that are closer in the graph topology are generally considered to exhibit stronger feature similarity \citep{you2019position, zhang2018link}. Motivated by this principle, we design a progressive subgraph diffusion strategy that replaces global propagation with a distance-aware, layer-wise process. Rather than diffusing information uniformly, the method propagates \emph{features hierarchically from observed regions to unobserved ones.}  Nodes closer to observed features are imputed more accurately (see, e.g., Appendix~\ref{sec:different distance}), reinforcing the design choice of localized, incremental expansion to preserve reliable signals and limit early error propagation.

We now define subgraph construction at the level of individual feature dimensions. For each feature $\ell$, let $\mathcal{V}_+^{\ell} := \{ v_k \in \mathcal{V} \mid M_{k\ell} = 1 \}$ denote the set of nodes with observed values, and let $\mathcal{V}_-^{\ell}:= \mathcal{V} \setminus \mathcal{V}_+^{\ell}$ denote the set with missing values. The initial subgraph $\mathcal{G}^{(0)}$ is formed over $\mathcal{V}_+^{\ell}$ using adjacency relations inherited from the original graph $\mathcal{G}$. Since observed features may be spatially sparse or fragmented, $\mathcal{G}^{(0)}$ typically consists of multiple disconnected components. Next, to complete the feature progressively, we expand the subgraph in layers by incorporating nodes from $\mathcal{V}_-^{\ell}$ based on their shortest-path distance to $\mathcal{V}_+^{\ell}$. At layer $m$, the subgraph $\mathcal{G}^{(m)}$ includes all nodes within distance $m$ of $\mathcal{V}_+^{\ell}$, forming the radius-$m$ neighborhood $\mathcal{V}^{(m)}$, along with all edges among those nodes. As $m$ increases, the subgraph expands outward and disconnected regions gradually merge. In the limit, when $m$ equals the graph diameter, the subgraph $\mathcal{G}^{(m)}$ recovers the full graph $\mathcal{G}$.

At each layer-$m$ subgraph, we apply fractional diffusion described in Theorem~\ref{thm:fractional diffusion} independently to each connected component. Let $\mathcal{G}_i^{(m)}$ denote the $i$-th connected component of $\mathcal{G}^{(m)}$, with corresponding adjacency matrix $A_i^{(m)}$. The fractional diffusion operator on this component is defined as $\bA_i^{\gamma, m} := \bA^\gamma(\mathcal{G}_i^{(m)})$, using the formulation in \eqref{eq:element-adj}. Let $x_i^{(m)}(t)$ denote the value of node $v_i$ in each feature channel at iteration $t$, the diffusion update is thus given by
\begin{align}\label{eq:diffusion}
{x}_i^{(m)}(t) = \sum_{v_j \in \mathcal{N}^{(m)}(i)} \bA_{ij}^{\gamma,m} \cdot x_j^{(m)}(t-1),
\end{align} \black
where $\mathcal{N}^{(m)}(i)$ is the neighborhood of node $v_i$ in $\mathcal{G}^{(m)}$.

As the subgraph expands with increasing $m$, it gradually includes more distant nodes, which may carry unreliable or noisy information. This expansion can degrade the accuracy of features imputed in earlier layers. To address this, we introduce a \emph{retention} mechanism that stabilizes updates by blending new estimates with those from previous layers. Additionally, we enforce a boundary condition to preserve observed features throughout the diffusion process. Let $M$ denote the binary mask matrix. At each iteration $t = 1, \dots, K$, the channel-wise update rule for node $v_i$ is given by
\begin{equation}\label{eq:modify}
x_i^{(m)}(t) =  x_i^{(m)}(0) \odot M_i + \left(x_i^{(m)}(t) + \lambda x_i^{(m-1)}(K) \right) \odot (1 - M_i),
\end{equation}
where $\odot$ denotes the Hadamard (element-wise) product and $x_i^{(m-1)}(K)$ represents the converged result of node $v_i$ from the previous layer after $K$ iterations that weighted by $\lambda$.

This update ensures that observed features remain fixed across all iterations, while missing values are updated based on a blend of current estimates and the previous layer’s outputs. As $m$ increases, this retention mechanism promotes stability and gradual refinement, improving the reliability of imputed features across the diffusion process. After $K$ propagation steps using the update rules in \eqref{eq:diffusion} and \eqref{eq:modify}, we obtain a refined estimate ${x}_u^{(m)}(K)$ for each node $u \in \mathcal{V}^{(m)}$ in each feature channel, corresponding to the $m$-th layer subgraph $\mathcal{G}^{(m)}$. As $K \to \infty$, the recursive updates converge to a fixed point. The following result establishes convergence of this process under fractional diffusion.

\begin{theorem}[\bf Convergence of subgraph diffusion] \label{thm:converge}
Let $A^{(m)}$ be the adjacency matrix of the $m$-th layer subgraph $\mathcal{G}^{(m)}$, and let $\bA^{\gamma, m}$ denote its fractional diffusion matrix as defined in \eqref{eq:element-adj}. Let $x^{(m)} \in \mathbb{R}^{|\mathcal{V}^{(m)}|}$ be the feature vector in channel $\ell$, where missing entries are initialized to zero. Let $\lambda > 0$ denote the retention coefficient, and $M$ be the binary mask vector for observed entries in channel $\ell$. Define the update sequence by

\begin{align*}
x^{(m)}(t) =  x^{(m)}(0) \odot M + \left(\bA^{\gamma,m}x^{(m)}(t-1) + \lambda x^{(m-1)}(K) \right) \odot (1 - M), \quad t = 1, \dots, K
\end{align*}
Then, for sufficiently large $K$, the sequence $x^{(m)}(t)$ converges to a fixed (unique) state.
\end{theorem}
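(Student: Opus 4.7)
The plan is to turn the stated update into a classical affine recurrence on the unobserved coordinates and prove a spectral-radius bound on its linear part. Partition the nodes of $\mathcal V^{(m)}$ into observed indices $O = \{k : M_k = 1\}$ and unobserved indices $U = \{k : M_k = 0\}$, and write $P := \bA^{\gamma,m}$ in the corresponding block form with sub-blocks $P_{UU}$ and $P_{UO}$. The boundary condition pins $x^{(m)}_O(t) = x^{(m)}_O(0)$ for every $t \geq 0$, so the non-trivial part of the iteration reduces to
\begin{align*}
x^{(m)}_U(t) = P_{UU}\, x^{(m)}_U(t-1) + c, \qquad c := P_{UO}\, x^{(m)}_O(0) + \lambda\, x^{(m-1)}_U(K),
\end{align*}
where $c$ is independent of $t$ since $x^{(m-1)}(K)$ is the converged output of the previous outer layer (coordinates in $\mathcal V^{(m)} \setminus \mathcal V^{(m-1)}$ are treated as zero here).

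The core step is to show $\rho(P_{UU}) < 1$. Since $P$ is row-stochastic by \eqref{eq:element-adj}, $P_{UU}$ is nonnegative with row sums $\sum_{j \in U} P_{ij} = 1 - \sum_{j \in O} P_{ij} \leq 1$, and this inequality is strict precisely when $i$ has at least one observed neighbor in $\mathcal G^{(m)}$. To spread that deficit to every row, I need that each $i \in U$ is connected inside $\mathcal G^{(m)}$ to some observed node. This follows from the construction of the layer: $\mathcal V^{(m)}$ contains every node within graph-distance $\leq m$ of $\mathcal V^{\ell}_+$, and $\mathcal G^{(m)}$ retains all edges induced on $\mathcal V^{(m)}$, so a shortest path in $\mathcal G$ from $i$ to $\mathcal V^{\ell}_+$, of length $r \leq m$, lies entirely inside $\mathcal G^{(m)}$. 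Viewing $P_{UU}$ as the substochastic transition kernel of a random walk absorbed on $O$, this reachability implies that after $r$ steps every starting node has strictly positive absorption probability, giving $\|P_{UU}^r \mathbf{1}\|_\infty < 1$ and hence $\rho(P_{UU}) \leq \|P_{UU}^r\|_\infty^{1/r} < 1$.

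With a contractive linear part, the conclusion is routine: $I - P_{UU}$ is invertible, and the affine recurrence converges geometrically to the unique fixed point $x^{(m,\star)}_U = (I - P_{UU})^{-1} c$. Combined with the frozen observed block, this yields the unique limiting state in $\mathbb R^{|\mathcal V^{(m)}|}$ asserted by the theorem, while the degenerate case $U = \emptyset$ is immediate. The main obstacle is the reachability claim in the second paragraph: it is the only place the geometry of the progressive expansion actually enters, and without an observed node in each component of $\mathcal G^{(m)}$ a block of $P_{UU}$ would remain properly stochastic, destroying both uniqueness of the limit and the geometric rate. Verifying this via the shortest-path construction is therefore the critical structural input the proof relies on.
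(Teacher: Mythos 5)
Your proof is correct, and its overall skeleton matches the paper's: both reduce the masked update to an affine recurrence $x(t) = Gx(t-1)+b$ and conclude via a spectral-radius contraction and the unique fixed point $(I-G)^{-1}b$. The difference lies in how the contraction is justified, and here your argument is genuinely stronger. The paper works with the full-size matrix $G = Q\,\bA^{\gamma,m}$ (where $Q$ zeroes out observed rows) and asserts that because $G$ is sub-stochastic with \emph{at least one} row sum strictly below $1$, its spectral radius satisfies $\rho(G)<1$. That implication is false in general: a sub-stochastic matrix can contain a properly stochastic irreducible block (e.g.\ a permutation block) disjoint from the deficient row and still have $\rho(G)=1$. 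The paper only patches this by appending connectedness of the subgraph as a side condition, without actually using it in the spectral estimate. Your proof supplies exactly the missing ingredient: you restrict to the unobserved block $P_{UU}$ (note $\rho(G)=\rho(P_{UU})$, so the two formulations are equivalent), prove via the shortest-path/induced-subgraph argument that every unobserved node in $\mathcal{G}^{(m)}$ can reach an observed node inside $\mathcal{G}^{(m)}$, and convert that reachability into the quantitative bound $\rho(P_{UU}) \le \|P_{UU}^{\,r}\|_\infty^{1/r} < 1$ through the absorbed-random-walk interpretation. This is the standard absorbing-chain argument and it is what actually makes the deficit propagate to every row; it also makes explicit why the progressive-expansion construction matters (each layer's unobserved nodes are guaranteed an observed anchor in their component), which the paper leaves implicit. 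One cosmetic repair: your exponent $r$ is defined per node, so take $r = \max_{i\in U} d(i,O) \le m$ and use monotonicity of $(P_{UU}^{\,r}\mathbf{1})_i$ in $r$; this is immediate since mass in $U$ can only decrease.
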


Theorem~\ref{thm:converge} guarantees that the iterative update process over each subgraph leads to a well-defined steady-state solution for missing features. By incorporating a retention factor, the method balances the influence of prior estimates and newly propagated information, thereby limiting error accumulation across layers. As the subgraph expands with increasing radius, progressively more nodes are included and more structural context is captured. Theorem~\ref{thm:global converge} shows that under mild assumptions, this layer-wise refinement converges to the solution that would have been obtained by applying the diffusion update over the entire graph at once. In this way, the global behavior of the model is recovered as the natural limit of consistent local operations.
\black

\begin{theorem}[\bf Global convergence via progressive subgraph expansion]\label{thm:global converge}
Let $\mathcal{G}^{(m)}$ be the $m$-hop expansion of the observed node set $\mathcal{V}_+^{\ell}$ in channel $\ell$, and let $x^{(m)}$ be the corresponding feature estimate after applying the masked fractional diffusion update defined in Theorem~\ref{thm:converge}. Assume the graph $\mathcal{G}$ is connected and the diffusion sharpness parameter $\gamma$ is finite. Then as $m \to M_{\max}$, where $\mathcal{G}^{(m)} \to \mathcal{G}$, the final estimate $x^{(m)}(\infty)$ converges to the steady-state solution of the full-graph diffusion update.
\end{theorem}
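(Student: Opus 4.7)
The plan is to leverage the finiteness of the graph diameter together with the contraction properties of the masked fractional diffusion operator, reducing the claim to a Banach fixed-point argument on the outer recursion in $m$. Since $\mathcal{G}$ is connected, its diameter $D$ is finite, so there exists a finite $M^\star \leq D$ beyond which the hop-$m$ expansion stabilizes: $\mathcal{G}^{(m)} = \mathcal{G}$ and $\mathcal{V}^{(m)} = \mathcal{V}$ for all $m \geq M^\star$. For such $m$, the layer-$m$ operator $\bA^{\gamma,m}$ is identically the full-graph operator $\bA^\gamma$, and Theorem~\ref{thm:converge} guarantees a unique inner fixed point $x^{(m)}(\infty)$ that satisfies
\begin{equation*}
x = x(0) \odot M + \bigl(\bA^\gamma x + \lambda\, x^{(m-1)}(K)\bigr) \odot (1-M).
\end{equation*}
This reduces the theorem to analyzing the outer recursion $m \mapsto x^{(m)}(\infty)$ once the underlying subgraph has stabilized.

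Next, I would recast the outer recursion as a Picard iteration on the unobserved block. Let $U = \{i : M_i = 0\}$ and let $\bA^\gamma_{UU}$ denote the corresponding principal submatrix of $\bA^\gamma$. Because $\mathcal{V}_+^\ell$ is nonempty and $\mathcal{G}$ is connected, every unobserved node can reach some observed node under the random walk induced by $\bA^\gamma$, so by the standard absorbing Markov chain argument the spectral radius $\rho(\bA^\gamma_{UU}) < 1$. Solving the fixed-point equation above for the unobserved coordinates yields a well-defined affine map $T(y) = (I - \bA^\gamma_{UU})^{-1}\bigl(c + \lambda y_U\bigr)$, where $c$ collects the observed boundary contribution. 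The outer recursion then reads $x^{(m)}(\infty) = T\bigl(x^{(m-1)}(K)\bigr)$, with Lipschitz constant bounded by $\lambda \cdot \lVert (I - \bA^\gamma_{UU})^{-1}\rVert$. For retention $\lambda$ in the regime implicitly required by Theorem~\ref{thm:converge}, this map is a contraction, and Banach's fixed point theorem produces a unique geometric limit $x^\star$ satisfying $x^\star = T(x^\star)$, which is precisely the self-consistent steady state of the masked full-graph diffusion update.

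The main obstacle is the cross-layer coupling introduced by $\lambda x^{(m-1)}(K)$: the scheme produces a nested pair of fixed-point problems, one in $t$ (inner, on a fixed subgraph) and one in $m$ (outer, across expansions), and the outer limit must be identified with a single self-consistent full-graph fixed point rather than an $m$-indexed family. The crux is the spectral bound $\rho(\bA^\gamma_{UU}) < 1$, which rests on the absorbing-boundary interpretation of observed nodes in a connected graph; once this gap is in hand, the contraction of $T$, the geometric convergence of the outer iteration, and its identification with the full-graph steady state all follow by routine Banach-type estimates.
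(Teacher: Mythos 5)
The first half of your argument is sound and matches the paper's own groundwork: connectivity of a finite graph gives a finite terminal radius $M_{\max}\le\mathrm{diam}(\mathcal{G})$ with $\mathcal{G}^{(m)}=\mathcal{G}$ for all $m\ge M_{\max}$, and, writing $U$ and $O$ for the unobserved and observed index sets, each layer's inner fixed point has the closed form $x^{(m)}_U(\infty)=(I-\bA^{\gamma}_{UU})^{-1}\bigl(\bA^{\gamma}_{UO}\,x_O(0)+\lambda x^{(m-1)}_U(K)\bigr)$, with $\rho(\bA^{\gamma}_{UU})<1$ by the absorbing-chain argument; this is the same spectral fact ($\rho(Q\bA^{\gamma,m})<1$) that the paper uses to prove Theorem~\ref{thm:converge}. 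The problem is the second half, where you recast the layer index $m$ as an infinite Picard iteration.

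The gap is threefold, and it is concrete. \emph{(i)} The outer recursion is finite: the expansion stops once $\mathcal{G}^{(m)}=\mathcal{G}$, so ``$m\to M_{\max}$'' is a finite limit and your map $T$ is applied exactly once at the full graph, starting from $x^{(M_{\max}-1)}(K)$. There is no infinite sequence of outer iterates for Banach's theorem to act on. The paper's proof needs no outer contraction for precisely this reason: it argues that once a node's full neighborhood has been absorbed, its row of $\bA^{\gamma,m}$ equals its row of $\bA^{\gamma}$, so at the terminal layer the masked update \emph{is} the full-graph update, and Theorem~\ref{thm:converge} applied at that layer (with the retention term $\lambda x^{(M_{\max}-1)}(K)$ sitting inside the constant forcing $b$) already delivers its steady state. \emph{(ii)} Your contraction hypothesis $\lambda\,\|(I-\bA^{\gamma}_{UU})^{-1}\|<1$ is not ``implicitly required by Theorem~\ref{thm:converge}'': that theorem holds for \emph{every} $\lambda>0$, because in the inner iteration the retention term is a constant. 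Moreover the hypothesis fails in the paper's actual regime: $(I-\bA^{\gamma}_{UU})^{-1}\mathbf{1}$ is the vector of expected hitting times of the observed set, so its norm is at least the largest graph distance from an unobserved node to $\mathcal{V}^{\ell}_+$, which at $99.5\%$ missingness is far above $1$, while the paper tunes $\lambda$ as large as $0.9$. Indeed, if the outer map really were iterated forever, the iterates would diverge geometrically whenever $\lambda\ge 1-\rho(\bA^{\gamma}_{UU})$, so under your reading the theorem would simply be false for the paper's parameter range. \emph{(iii)} Consequently the point you converge to, the self-consistent solution of $x^{\star}=T(x^{\star})$ (i.e., with $\lambda x^{\star}$ in the retention slot), is a different object from what the paper means and proves, namely the fixed point of the terminal-layer update with the \emph{frozen} previous-layer estimate $x^{(M_{\max}-1)}(K)$ in that slot. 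So even granting your contraction, you would be proving convergence to the wrong target.
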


\subsection{Class-level feature refinement}\label{sec:class-level}

After computing the feature matrix using fractional and subgraph diffusion (Sections~\ref{sec:frac} and~\ref{sec:progressive}), we perform a class-guided refinement step. This stage is motivated by the observation that, under high missing rates, most features must be inferred from a sparse set of observed values. Diffusion alone tends to blur discriminative patterns, especially when semantic boundaries are inherently weak. Class-level propagation addresses this issue by injecting semantic structure into the imputation process to improve feature quality under severe sparsity.

We begin by assigning pseudo-labels to unlabeled nodes using a semi-supervised classifier based on a standard GCN architecture \citep{kipf2016semi}. Let $\mathcal{V}_L \subset \mathcal{V}$ denote the set of labeled nodes with ground-truth labels $y$, and let $\mathcal{V}_{uL} = \mathcal{V} \setminus \mathcal{V}_L$ be the set of unlabeled nodes. For each node in $\mathcal{V}_{uL}$, we predict a pseudo-label $\tilde{y}$, while preserving the true labels in $\mathcal{V}_L$. Then, for each class $c \in C$, we construct a class-specific graph by introducing a synthetic class node connected to all nodes in $\mathcal{G}$ with missing features and predicted label $c$. Feature propagation within each class graph is then performed, with the synthetic node serving as a class-level anchor. This step promotes intra-class consistency and strengthens inter-class separation, resulting in more robust downstream representations.

To account for potential errors in pseudo-label assignments, we introduce a credibility weight based on the entropy of neighborhood label distributions. This score modulates each node’s contribution to its class feature according to the consistency of local labels.

\begin{definition}[\bf Neighborhood label information entropy]
Let $\hat{\mathcal{N}}_i = \mathcal{N}_i \cup \{v_i\}$ denote the extended neighborhood of node $v_i$ including itself. For each class $c \in C$, with $\mathbbm{1}_{(\cdot)}$ is the indicator function, we define the normalized label entropy
\begin{equation}
S_i = - \left(1/\log \left(|\hat{\mathcal{N}}_i|\right)\right) \sum_{c \in C} P_i(c) \cdot \log \left(P_i(c) \right) \quad \mbox{with} \quad P_i(c) = \left(1/|\hat{\mathcal{N}}_i|\right) \sum_{j\in \mathcal{N}_i} \mathbbm{1}_{(\tilde{y}_j = c)}.
\end{equation}
\end{definition}

The entropy score $S_i$ lies in the range $[0, 1]$, where lower values indicate greater label consistency in the neighborhood of node $i$, and higher values reflect uncertainty. When $S_i$ is small, the node is more representative of its assigned class and should contribute more when computing the class feature. Conversely, nodes with high entropy may not provide reliable class information and are thus down-weighted. To capture this, we assign each node a confidence weight of $1 - S_i$ and compute class-specific feature $x^\star_{(c)}$ as a weighted average of features from all nodes assigned to class $c$ by 
\begin{equation}
x^\star_{(c)} = \big( \sum_{\tilde{y}_i = c} (1 - S_i) \cdot x_i \big)/ \big(\sum_{\tilde{y}_i = c} (1 - S_i)\big),
\end{equation}
where $x_i$ is the observed/imputed feature of node $v_i$, and $x^\star_{(c)}$ is the aggregated class feature for class $c$.

To incorporate class-specific information into node features, we introduce a virtual class node $v^{(c)}$ for each class $c$, initialized with an aggregated class feature vector $x_{(c)}^\star$. For each class, we construct a class graph $\mathcal{G}^{(c)} = (\mathcal{V}^{(c)}, \mathcal{E}^{(c)})$, where the node set $\mathcal{V}^{(c)}$ includes the class node $v^{(c)}$ and the set $\mathcal{V}_-^{(c)}$ of nodes in $\mathcal{G}$ with missing features and pseudo-label $c$. Each class graph includes self-loops for all nodes and directed edges from the class node $v^{(c)}$ to every node in $\mathcal{V}_-^{(c)}$. These edges allow class-level information to flow toward incomplete nodes, guiding feature refinement. The corresponding feature matrix $X^{(c)}$ for $\mathcal{G}^{(c)}$  is defined by $X^{(c)} = [X_-^{(c)} \;\; x^\star_{(c)}]^T$,
wherein $X_-^{(c)} \in \mathbb{R}^{|\mathcal{V}_-^{(c)}| \times F}$ contains the features of nodes in $\mathcal{V}_-^{(c)}$ imputed during the earlier stage, and $x_{(c)}^\star \in \mathbb{R}^{1 \times F}$ is the class anchor feature assigned to the virtual node $v^{(c)}$.

In the pre-classification GCN, the final layer maps each node embedding $z_i$ to a class probability distribution $\mathbf{y}_i$. These probabilities reflect the discriminative confidence of the node across classes and can be used to guide class-level propagation. We apply a temperature-scaled softmax to control the sharpness of this distribution as $\mathbf{y}_i = \text{softmax}(z_i / T)$, where $T > 0$ is a temperature parameter. The scalar value $\mathbf{y}_i$ assigned to the predicted class serves as the self-loop weight, while $(1 - \mathbf{y}_i)$ is used for the incoming edge from the class node.

We construct a weighted adjacency matrix $\mathbf{W}^{(c)}$ to perform diffusion within each class graph $\mathcal{G}^{(c)}$. The class node is placed last in the node ordering. For each node $i$ in $\mathcal{V}_-^{(c)}$, the diagonal entry $\mathbf{W}^{(c)}_{ii}$ is set to its predicted class probability $\hat{y}_i$. The entry $\mathbf{W}^{(c)}_{i,\text{cls}}$ is set to $1 - \hat{y}_i$, allowing class-level information to flow toward uncertain nodes. The class node has a self-loop with weight $1$. All other entries are zero.\footnote{For example, the matrix with two incomplete nodes reads
$\tiny
\mathbf{W}^{(c)} =
\begin{bmatrix}
\hat{y}_1 & 0 & 1 - \hat{y}_1 \\
0 & \hat{y}_2 & 1 - \hat{y}_2 \\
0 & 0 & 1
\end{bmatrix}
$
, where the class node corresponds to the last row and column.} Feature refinement is performed using a single propagation step as $\hat{X}^{(c)} = \mathbf{W}^{(c)} X^{(c)}$, where $X^{(c)}$ contains the imputed node features and the class anchor. After processing all class graphs, the refined features are mapped back to their original node indices. Observed features are then restored to produce the final output matrix $\hat{X}$, and passed to the downstream GNN for prediction.

\section{Experiments}\label{sec:Experiments}
\subsection{Experimental setup}
\paragraph{Datasets.}
We evaluate our method on five benchmark datasets: three citation networks (\textbf{Cora}, \textbf{CiteSeer}, and \textbf{PubMed}) \citep{sen2008collective}, where nodes represent papers and edges indicate citation links; and Amazon co-purchase networks (\textbf{Photo} and \textbf{Computers}) \citep{shchur2018pitfalls}, where nodes are products and edges connect items frequently bought together. Additional dataset details are provided in Appendix~\ref{sec:datasets}. To simulate missing features, we randomly remove node attributes according to a missing rate parameter $mr$, under two settings: \textbf{\emph{Uniform Missing.}} A random $mr$ percentage of feature entries in the matrix $X$ is masked and set to zero, simulating cases where nodes have partially missing attributes; \textbf{\emph{Structural Missing.}} A random $mr$ percentage of nodes is selected, and all features associated with those nodes are masked, modeling cases where some nodes lack features entirely.

\paragraph{Baselines.}
We compare against three baseline models and four state-of-the-art methods from both \emph{deep learning-based} and \emph{diffusion-based} approaches. \textbf{Zero (Baseline1)} sets all missing feature values to zero and applies a standard GCN \citep{kipf2016semi}. \textbf{PaGCN (Baseline2)} \citep{zhang2024incomplete} uses partial graph convolution over observed features without modeling missingness. \textbf{FP (Baseline3)} \citep{rossi2022unreasonable} performs direct feature propagation using the normalized adjacency matrix. Among state-of-the-art methods, \textbf{GRAFENNE} \citep{gupta2023grafenne} constructs a three-phase message-passing framework to learn on graphs. \textbf{ITR} \citep{tu2022initializing} and \textbf{ASDVAE} \citep{jiang2024incomplete} utilize the distribution relationship between attributes and structures for feature completion. \textbf{PCFI} \citep{um2023confidence} incorporates inter-node and inter-channel correlations through confidence-aware diffusion.

\paragraph{Evaluation settings and implementation.}

We evaluate FSD-CAP on semi-supervised node classification and link prediction tasks. For node classification, we follow the setup in \citet{gasteiger2019diffusion}, selecting 20 nodes per class from a random pool of 1500 nodes used for training and validation; the remaining nodes are used for testing. Classification accuracy is used to assess imputation quality. For link prediction, we adopt the edge split from \citet{kipf2016variational}, using $85\%$ of edges for training, $5\%$ for validation, and $10\%$ for testing. AUC and AP are used as evaluation metrics.

We average results over $10$ random data splits and report the mean and standard deviation of accuracy, AUC, and AP. Hyperparameters are selected by grid search on the validation set. For baselines, we adopt the settings from the authors’ released code or papers; when such settings are unavailable, we run a grid search over a reasonable range. Additional details are provided in Appendix~\ref{sec:Experimental Settings} and Appendix~\ref{sec:fsd-cap implementation}.

\begin{table}
\caption{Accuracy ($\%$) comparison for node classification at $mr = 0.995$. Bold values indicate the best performance; underlined values indicate the second-best. ``OOM'' denotes out-of-memory errors. ``-'' indicates methods that are not applicable under the given missing setting.}
\label{tab:node_cls}
\centering
\setlength{\tabcolsep}{3pt}
\textbf{\scriptsize Structural Missing}
\scalebox{0.7}{
\begin{tabular}{c|c| c c c c c c c c}
\toprule
\multicolumn{1}{c|}{Dataset}
 & Full Features & Zero & PaGCN & FP & GRAFENNE & ITR & ASD-VAE & PCFI & \cellcolor{lightred}{FSD-CAP} \\ 
\midrule
Cora & 82.72 ± 1.61 & 43.50 ± 8.69 & 31.78 ± 5.68 & 72.71  ± 2.49 & 33.29 ± 5.29 & 59.37 ± 1.72 & 30.01 ± 0.55 & \underline{75.36 ± 1.86} & \cellcolor{lightred}\textbf{80.56 ± 1.83} \\
CiteSeer & 70.00 ± 1.35 & 31.29 ± 5.04 & 24.24 ± 1.63 & 57.98 ± 2.93 & 23.43 ± 2.37 & 33.83 ± 1.44 & 27.85 ± 3.93 & \underline{66.06 ± 2.78} & \cellcolor{lightred}\textbf{71.94 ± 1.32} \\
PubMed & 77.46 ± 2.09 & 46.14 ± 4.26 & 38.80 ± 5.04 & 74.18 ± 3.15 & 41.84 ± 1.71 & OOM & OOM & \underline{74.44 ± 2.11} & \cellcolor{lightred}\textbf{76.98 ± 1.41} \\
Photo & 91.63 ± 0.62 & 79.04 ± 2.26 & 64.31 ± 6.17 & 86.34 ± 1.09 & 50.73 ± 5.68 & 73.59 ± 3.98 & 30.85 ± 8.95 & \underline{87.38 ± 1.04} & \cellcolor{lightred}\textbf{89.18 ± 0.97} \\
Computers & 84.72 ± 1.25 & 71.71 ± 2.47 & 58.56 ± 2.36 & 77.19 ± 2.16 & 40.31 ± 6.26 & OOM & OOM & \underline{78.71 ± 1.49} & \cellcolor{lightred}\textbf{81.64 ± 1.21} \\
\midrule
Average & 81.31 ± 1.38 & 54.34 ± 4.54 & 43.54 ± 4.18 & 73.68 ± 2.36 & 37.92 ± 4.26 & OOM & OOM & \underline{76.39 ± 1.86} & \cellcolor{lightred}\textbf{80.06 ± 1.21} \\
\bottomrule
\end{tabular}}

\vspace{5pt}

\textbf{\scriptsize Uniform Missing}
\scalebox{0.7}{
\begin{tabular}{c|c| c c c c c c c c}
\toprule
\multicolumn{1}{c|}{Dataset} & Full Features & Zero & PaGCN & FP & GRAFENNE & ITR & ASD-VAE & PCFI & \cellcolor{lightred}{FSD-CAP} \\  
\midrule
Cora & 82.72 ± 1.61 & 63.37 ± 2.02 & 62.21 ± 1.83 & 78.36 ± 1.76 & 39.86 ± 4.81 & {\qquad - \qquad} & 33.22 ± 5.27 & \underline{78.55 ± 1.37} & \cellcolor{lightred}\textbf{81.49 ± 1.95} \\
CiteSeer & 70.00 ± 1.35 & 53.66 ± 2.65 & 28.58 ± 4.31 & 65.31 ± 1.29 & 29.65 ± 2.76 & {\qquad - \qquad} & 41.65 ± 8.90 & \underline{69.11 ± 1.87} & \cellcolor{lightred}\textbf{73.15 ± 0.98} \\
PubMed & 77.46 ± 2.09 & 54.26 ± 2.68 & 41.48 ± 2.00 & 73.74 ± 2.18 & 44.43 ± 1.39 & {\qquad - \qquad} & OOM & \underline{76.01 ± 1.64} & \cellcolor{lightred}\textbf{77.46 ± 1.15} \\
Photo & 91.63 ± 0.62 & 84.96 ± 1.25 & 85.61 ± 0.69 & 88.04 ± 1.53 & 51.39 ± 5.21 & {\qquad - \qquad} & 32.45 ± 14.95 & \underline{88.55 ± 1.26} & \cellcolor{lightred}\textbf{89.40 ± 0.94} \\
Computers & 84.72 ± 1.25 & 78.99 ± 1.04 & 77.58 ± 1.96 & 80.67 ± 1.46 & 36.98 ± 2.25 & {\qquad - \qquad} & OOM & \underline{81.64 ± 1.05} & \cellcolor{lightred}\textbf{83.57 ± 0.95} \\
\midrule
Average & 81.31 ± 1.38 & 67.05 ± 3.08 & 59.09 ± 2.16 & 77.22 ± 1.64 & 40.46 ± 3.28 & {\qquad - \qquad} & OOM & \underline{78.77 ± 1.44} & \cellcolor{lightred}\textbf{81.01 ± 1.19} \\
\bottomrule
\end{tabular}}
\vspace{-0.15in}
\end{table}

\subsection{Semi-supervised Node Classification}\label{sec:Semi-supervised Node Classification}

We evaluate how classification accuracy varies with the missing rate $mr$ in the semi-supervised node classification task. The missing rate is increased from $0.6$ to $0.995$, and all methods are tested under both structural and uniform missing scenarios. Results for the Cora and CiteSeer datasets are shown in Figure~\ref{fig:2}; additional results for PubMed, Photo, and Computers are provided in Appendix~\ref{sec:accuracy on PubMed, Photo and Computers}.

As $mr$ increases, the accuracy of all methods declines. Latent-space approaches (\textbf{ITR} and \textbf{ASD-VAE}) exhibit significant performance degradation at high missing rates. In contrast, \textbf{FSD-CAP} remains robust across datasets and maintains competitive accuracy even at $mr = 0.995$. Diffusion-based methods generally outperform latent-space methods, with FSD-CAP consistently achieving the best performance. Compared to the strongest baseline, \textbf{PCFI}, our method shows larger gains as the missing rate increases.

\begin{figure}[htb!]
    \centering
    \includegraphics[width=\linewidth]{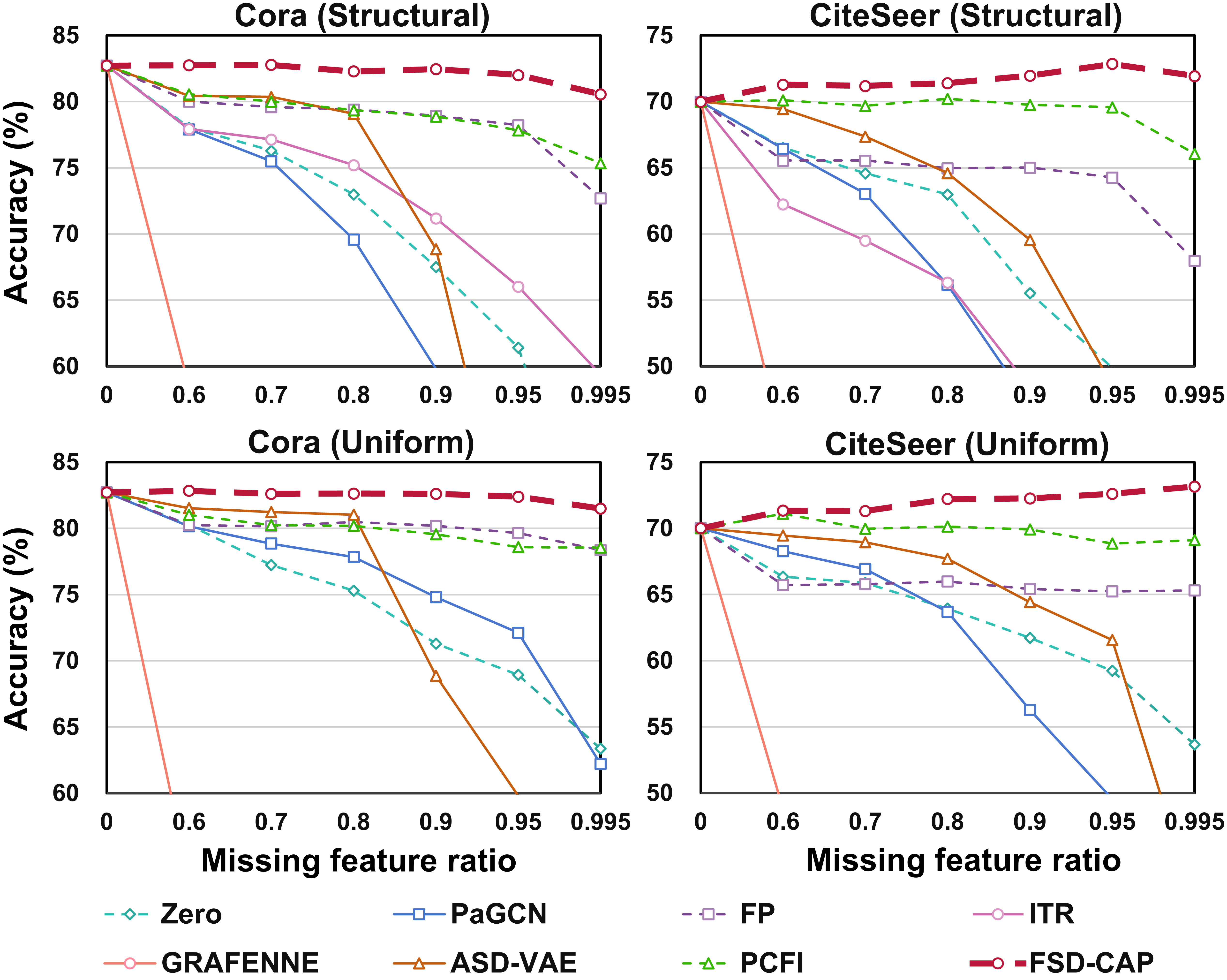}
    \caption{Node classification accuracy (\%) comparison on Cora and CiteSeer datasets with $mr\in \{0.6,0.7,0.8,0.9,0.95,0.995\}$. The top row displays results for structural missing, while the bottom row shows results for uniform missing. Methods that encounter out-of-memory errors or are not applicable to specific missing scenarios are excluded from the corresponding plots.}
    \label{fig:2}
\end{figure}

As shown in Table~\ref{tab:node_cls}, we report node classification accuracy for all methods across five datasets at a fixed missing rate of $mr = 0.995$.
\textbf{ITR} is designed specifically for structural missing scenarios, where nodes are either fully observed or entirely missing, and cannot be applied to uniform missing settings. We therefore report its performance only under structural missing.

Our method consistently achieves the highest accuracy across all datasets at $mr = 0.995$. In extreme missing scenarios, several deep learning-based methods such as \textbf{GRAFENNE} and \textbf{ASD-VAE} perform worse than the simple zero-filling baseline (\textbf{Baseline1}). On large-scale datasets like Photo and Computers, both \textbf{ASD-VAE} and \textbf{ITR} fail with out-of-memory errors, highlighting their limited scalability. In the structural missing setting, \textbf{FSD-CAP} outperforms the best-performing baseline (\textbf{PCFI}) with a relative improvement of $6.90\%$ on Cora, computed as $(80.56 - 75.36) / 75.36 \times 100\% = 6.90\%$.
It also achieves gains of $8.90\%$, $3.41\%$, $2.06\%$, and $3.72\%$ on CiteSeer, PubMed, Photo, and Computers, respectively. Under the uniform missing setting, where only $0.5\%$ of node features are retained, \textbf{FSD-CAP} reaches average accuracy comparable to a GCN trained on fully observed features. Notably, on CiteSeer, it even surpasses the performance of the GCN with complete features.

\begin{table}
\caption{Performance comparison for link prediction at $mr$ = 0.995. OOM denotes out of memory. The bold and underlined represent the best and the suboptimal performance (\%), respectively.}
\label{tab:link_pred}
\centering
\setlength{\tabcolsep}{4pt}
\scalebox{0.7}{
\begin{tabular}{c|c|c|c c c c|c c c}
\toprule
\multirow{2}{*}{Dataset} & \multirow{2}{*}{Metric}& \multirow{2}{*}{Full Features} 
& \multicolumn{4}{c|}{\textbf{Structural Missing}} & \multicolumn{3}{c}{\textbf{Uniform Missing}} \\
\cline{4-10}
 & & & ITR & FP & PCFI & \cellcolor{lightred}{FSD-CAP} & FP & PCFI & \cellcolor{lightred}{FSD-CAP} \\
\midrule
\multirow{2}{*}{Cora} & AUC & 92.12 ± 0.71 & 82.01 ± 2.73 & 84.79 ± 1.99 & \underline{85.94 ± 1.49} & \cellcolor{lightred}\textbf{87.97 ± 1.24} & \underline{87.02 ± 1.26} & 86.85 ± 1.69 & \cellcolor{lightred}\textbf{90.01 ± 1.10} \\
& AP & 92.45 ± 0.72 & 83.76 ± 2.87 & 87.04 ± 2.46 & \underline{87.95 ± 1.20} & \cellcolor{lightred}\textbf{88.80 ± 1.18} & \underline{89.17 ± 0.83} & 88.86 ± 1.15 & \cellcolor{lightred}\textbf{90.66 ± 0.75} \\
\hline
\multirow{2}{*}{CiteSeer} & AUC & 91.02 ± 1.15 & 71.35 ± 3.25 & \underline{81.40 ± 1.32} & 80.13 ± 1.80 & \cellcolor{lightred}\textbf{87.48 ± 0.96} & 82.44 ± 1.50 & \underline{82.98 ± 1.78} & \cellcolor{lightred}\textbf{87.70 ± 1.09} \\
& AP &91.59 ± 1.20& 73.30 ± 2.42 & 83.62 ± 1.52 & \underline{83.72 ± 1.58} & \cellcolor{lightred}\textbf{87.82 ± 1.19} & 84.81 ± 0.85 & \underline{86.11 ± 1.60} & \cellcolor{lightred}\textbf{88.23 ± 1.16} \\
\hline
\multirow{2}{*}{PubMed} & AUC &96.88 ± 0.20& OOM & \underline{86.18 ± 0.43} & 82.68 ± 0.70 & \cellcolor{lightred}\textbf{88.39 ± 0.57} & \underline{86.32 ± 0.21} & 84.46 ± 0.85 & \cellcolor{lightred}\textbf{88.50 ± 0.45} \\
& AP &97.13 ± 0.24& OOM & 83.24 ± 0.74 & \textbf{86.03 ± 0.32} & \cellcolor{lightred}\underline{85.54 ± 0.72} & 83.33 ± 0.41 & \textbf{86.80 ± 0.39} & \cellcolor{lightred}\underline{85.60 ± 0.42} \\
\hline
\multirow{2}{*}{Photo} & AUC &97.85 ± 0.17& \underline{97.11 ± 0.43} & 91.44 ± 4.58 & 96.41 ± 0.47 & \cellcolor{lightred}\textbf{97.55 ± 0.16} & 94.97 ± 3.06 & \underline{97.07 ± 0.19} & \cellcolor{lightred}\textbf{98.16 ±0.05} \\
& AP &97.61 ± 0.22& \underline{96.96 ± 0.48} & 91.11 ± 4.33 & 96.02 ± 0.55+ & \cellcolor{lightred}\textbf{97.32 ± 0.25} & 94.51 ± 3.12 & \underline{96.90 ± 0.23} & \cellcolor{lightred}\textbf{98.08 ± 0.08} \\
\hline
\multirow{2}{*}{Computers} & AUC & 97.44 ± 0.23 & OOM & 90.01 ± 3.57 & \underline{94.44 ± 0.34} & \cellcolor{lightred}\textbf{96.85 ± 0.13} & 92.95 ± 3.60 & \underline{95.59 ± 0.23} & \cellcolor{lightred}\textbf{97.69 ± 0.07} \\
& AP &97.35 ± 0.25& OOM & 90.44 ± 3.10 & \underline{94.45 ± 0.34} & \cellcolor{lightred}\textbf{96.83 ± 0.14} & 93.08 ± 3.20 & \underline{95.56 ± 0.29} & \cellcolor{lightred}\textbf{97.74 ± 0.07}  \\

\midrule
\multirow{2}{*}{Average} & AUC & 95.06 ± 0.49 & OOM & 86.76 ± 2.38 & \underline{87.92 ± 0.96} & \cellcolor{lightred}\textbf{91.65 ± 0.61} & 88.74 ± 1.93 & \underline{89.39 ± 0.95} & \cellcolor{lightred}\textbf{92.41 ± 0.55} \\
& AP &95.23 ± 0.53& OOM & 87.09 ± 2.43 & \underline{89.63 ± 0.80} & \cellcolor{lightred}\textbf{91.26 ± 0.70} & 88.98 ± 1.68 & \underline{90.85 ± 0.73} & \cellcolor{lightred}\textbf{92.06 ± 0.50 } \\
\bottomrule
\end{tabular}}
\vspace{-0.15in}
\end{table}

\subsection{Link Prediction}
Table~\ref{tab:link_pred} reports AUC and AP scores for the link prediction task on five datasets at $mr = 0.995$. Due to severe performance degradation of some methods under high missing rates, we restrict the comparison to \textbf{ITR} and the diffusion-based methods \textbf{FP} and \textbf{PCFI}. As before, \textbf{ITR} is applicable only to structural missing and fails with out-of-memory errors on large-scale datasets.

Our method consistently outperforms both \textbf{FP} and \textbf{PCFI} across all datasets and missing types, with the sole exception of the AP metric on PubMed. These results confirm that our approach remains effective under extreme feature sparsity in both classification and link prediction tasks.

\subsection{Ablation Study and Evaluation on Large-Scale and Heterophilous Datasets}
We evaluate the contribution of each component in the framework, namely the fractional diffusion operator, progressive subgraph diffusion, and class-level feature refinement. Ablation results and discussion appear in Appendix~\ref{sec:ablation study}. We then test FSD-CAP on large-scale datasets and on datasets with heterophily. Across these settings, the method remains competitive and adapts well. Detailed results and analysis are provided in Appendix~\ref{sec:large dataset} and Appendix~\ref{sec:heterophily datasets}.

\section{Related works}

\subsection{Graph feature imputation}

Learning from incomplete data due to missing attributes or partial observations is a common challenge in real-world graph applications \citep{little2019statistical}. Existing approaches fall into two main paradigms: \textit{end-to-end} models, which integrate feature completion directly into the learning pipeline, and \textit{imputation-then-training} models, which treat imputation as a separate preprocessing step \citep{you2020handling,huo2023t2}. \textit{End-to-end} methods aim to jointly learn node representations and impute missing features within a unified architecture. PaGCN \citep{zhang2024incomplete} introduces a partial graph convolution that aggregates only observed features but struggles under high missing rates.
GRAFENNE \citep{gupta2023grafenne} constructs a three-phase message-passing framework enabling dynamic feature acquisition, though its scalability is limited.
Although expressive, these methods often require significant training data and computational resources. On the other hand, \textit{imputation-then-training} approaches estimate missing values first utilizing statistical methods \citep{liu2019multiple, batista2002study} or learned generative models \citep{kingma2013auto, yoon2018gain} before applying standard GNNs to the completed data. SAT \citep{chen2020learning} optimizes distributional discrepancies between graph structure and node attributes, while ITR \citep{tu2022initializing} and RITR \citep{tu2024revisiting} refine the input by amplifying trustworthy observed features. These models benefit from modularity but remain vulnerable to bias when missing rates are high. A recent survey provides a comprehensive overview of this landscape \citep{xia2025incomplete}.

\subsection{Graph diffusion}

Diffusion-based graph representation learning \citep{gasteiger2019diffusion, chamberlain2021grand} has been widely explored as a mechanism for neighborhood smoothing. Recent developments have extended classical diffusion beyond standard message-passing schemes. For example, ADC \citep{zhao2021adaptive} improves over GDC \citep{gasteiger2019diffusion} by learning adaptive neighborhood sizes from data, eliminating the need for manual tuning and enhancing both flexibility and generalization. In the context of imputation, diffusion can be viewed as a heat kernel process \citep{kondor2002diffusion}, where observed features propagate through graph to estimate missing values. The process assumes feature homophily (i.e., neighboring nodes are likely to share similar attributes) and is often analyzed via Dirichlet energy, which decreases as smoothness increases \citep{zhou2021dirichlet, zhang2023learnable, zhang2023amgcl}. FP \citep{rossi2022unreasonable} formalizes this by directly minimizing Dirichlet energy, yielding a simple and effective iterative algorithm for feature diffusion under high sparsity. However, it does not incorporate feature correlations or relational context. Recent work addresses this limitation by integrating richer structural signals. PCFI \citep{um2023confidence} guides diffusion using feature-level confidence across nodes and channels, while SGHFP \citep{lei2023self} models higher-order relationships using a hypergraph structure \citep{dong2025data}. Nonetheless, most existing methods apply uniform diffusion across nodes, ignoring variations in completion difficulty. As a result, poorly estimated nodes may contaminate their neighbors. 

\section{Conclusion}
In this paper, we propose a novel diffusion-based feature imputation approach for incomplete graph learning. By introducing a fractional diffusion operator and distance-guided progressive subgraph diffusion, we are able to adjust the propagation sharpness according to local graph structure and propagate information from observed to missing nodes hierarchically, thereby improving the stability of the diffusion process and the reliability of the imputed features. We further design a class-level refinement mechanism to enhance feature quality under severe feature sparsity. Extensive experiments conducted on a variety of benchmark datasets demonstrate that our method not only consistently outperforms state-of-the-art approaches in semi-supervised node classification and link prediction tasks, with particularly notable improvements  under extreme feature missingness, but also exhibits remarkable adaptability to diverse graph structures.

\subsection*{ETHICS STATEMENT}
FSD-CAP improves the robustness of GNNs under high feature-missing rates and may benefit applications in domains such as social networks and recommender systems. However, as with any imputation technique, there is a risk of misuse, particularly in inferring sensitive or private attributes from partial data. We encourage responsible use, including proper access controls and ethical oversight, especially when applying the model to contexts involving personal or sensitive information.

\subsection*{REPRODUCIBILITY STATEMENT}
The proposed imputation framework is introduced in detail in Section~\ref{sec:Proposed Method}, with a summary of the experimental setup provided in Section~\ref{sec:Experiments}. Further details, including data splitting strategies and hyperparameter settings, are given in the appendix. For reproducibility, we make the code and corresponding random seeds publicly available.

\subsection*{ACKNOWLEDGMENTS}
The proposed imputation framework is introduced in detail in Section~\ref{sec:Proposed Method}, with a summary of the experimental setup provided in Section~\ref{sec:Experiments}. Further details, including data splitting strategies and hyperparameter settings, are given in the appendix. For reproducibility, we make the code and corresponding random seeds publicly available.

\bibliographystyle{iclr2026_conference}
\bibliography{iclr2026_conference}

\newpage
\appendix
\section{Appendix}
\newtheorem{propositionAppendix}{Proposition}
\numberwithin{propositionAppendix}{section}
\renewcommand{\thepropositionAppendix}{A.\arabic{propositionAppendix}}

\newtheorem{theoremAppendix}{Theorem}
\numberwithin{theoremAppendix}{section}
\renewcommand{\thetheoremAppendix}{A.\arabic{theoremAppendix}}

\subsection*{Outline of the supplementary material}
\begin{itemize}
\item[\bf Section 1] {\bf -- Proofs of main results:} Formal proofs for the theoretical results stated in the main paper.
  
\item[\bf Section 2] {\bf -- Experimental analyses:} We present additional evaluations to support the effectiveness and robustness of FSD-CAP. First, ablation studies show that removing any single component leads to a consistent drop in performance across all datasets. Second, sensitivity analyses demonstrate the stability of the framework with respect to key parameters. A comparison with FP, a global diffusion baseline, highlights FSD-CAP’s ability to maintain accuracy at greater distances from observed nodes, validating the progressive diffusion mechanism. 

We also analyze class-level characteristics of imputed features and visualize the outputs of FP and PCFI to show that FSD-CAP better preserves inter-class separability. We further validate FSD-CAP’s strong practical scope and generality through comparative experiments on large-scale and heterophily datasets.Finally, we report classification accuracy under missing rates from $60\%$ to $99.5\%$, showing that FSD-CAP is highly robust to both the rate and structure of missing data. Under uniform missing with $mr = 0.995$, the average performance drop is just $0.3\%$ relative to the fully observed case; in some cases, performance even improves. FSD-CAP consistently achieves higher node classification accuracy than the state-of-the-art baseline PCFI under varying levels of feature missingness.

\item[\bf Section 3] {\bf --  Implementation and hyperparameters:} Experimental setup for both node classification and link prediction tasks, including dataset statistics, data splits, evaluation metrics, model architectures, and hyperparameter configurations for all baselines and FSD-CAP.

\item[\bf Section 4] {\bf --  Supplementary figures and tables:} Additional results comparing node classification accuracy on PubMed, Photo and Computers across multiple missing rates ($mr \in {0.6, 0.7, 0.8, 0.9, 0.95, 0.995}$). These results extend Section~\ref{sec:Semi-supervised Node Classification} of the main paper and further validate the robustness of FSD-CAP.

\item[\bf Section 5] {\bf --  Declaration of LLM usage:} We make a full disclosure regarding the utilization of the large language model (LLM) throughout the process of completing this thesis.
  
\end{itemize}

\subsection{Proofs of main results}

\begin{propositionAppendix}[\bf Limiting behavior of $\bA^\gamma$]
Let $\bA$ be the symmetric normalized adjacency matrix of a connected graph, and $\bA^\gamma \in \mathbb{R}^{N \times N}$ is defined as
\begin{align} \label{eq:element-adj-A}
\bA^\gamma_{ij} := (\bA_{ij})^\gamma/ \left(\sum_{k=1}^{N} (\bA_{ik})^\gamma \right).
\end{align}
We have
\begin{align}
\lim_{\gamma \to 0^+} \bA^\gamma_{ij} &=
\begin{cases}
\displaystyle \frac{1}{|\mathcal{N}(i)|} & \mathrm{if } \ \ \bA_{ij} > 0\\
0 & \mathrm{otherwise}
\end{cases}, \hspace{0.3in}
\lim_{\gamma \to \infty} \bA^\gamma_{ij} =
\begin{cases}
1 & \mathrm{if } \ \ j \in \arg\max_k \bA_{ik}\\
0 & \mathrm{otherwise}
\end{cases}.
\end{align}
where $\mathcal{N}(i) = \{j \mid \bA_{ij} > 0\}$ denotes the set of neighbors of node $i$.
\end{propositionAppendix}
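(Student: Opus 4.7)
The proposition splits into two independent limiting statements, one as $\gamma \to 0^+$ and one as $\gamma \to \infty$, and I would prove each by the same basic device: inspect the pointwise limit of each term $(\bA_{ij})^\gamma$ and then pass to the limit in the ratio defining $\bA^\gamma_{ij}$.

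For the case $\gamma \to 0^+$, my plan is to observe that for any fixed $j$ the map $\gamma \mapsto (\bA_{ij})^\gamma$ is continuous on $\gamma > 0$ with the convention $0^\gamma = 0$, so that $(\bA_{ij})^\gamma \to 1$ whenever $\bA_{ij} > 0$ and $(\bA_{ij})^\gamma = 0$ whenever $\bA_{ij} = 0$. Substituting into the denominator of \eqref{eq:element-adj-A} yields $\sum_k (\bA_{ik})^\gamma \to |\mathcal{N}(i)|$, and the ratio therefore tends to $1/|\mathcal{N}(i)|$ on the support $\mathcal{N}(i)$ and to $0$ off it. The only subtlety is that $|\mathcal{N}(i)| \geq 1$, which is guaranteed by connectedness of the graph, so the denominator is bounded away from $0$ and the limit of the ratio really is the ratio of the limits.

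For the case $\gamma \to \infty$, my plan is the standard softmax--to--argmax argument. Let $a^\star_i := \max_k \bA_{ik}$; since $\bA$ is the symmetric normalized adjacency of a connected graph, row $i$ has at least one strictly positive entry and $a^\star_i > 0$. I would factor out $(a^\star_i)^\gamma$ from numerator and denominator of \eqref{eq:element-adj-A}, rewriting
\begin{align*}
\bA^\gamma_{ij} \;=\; \frac{(\bA_{ij}/a^\star_i)^\gamma}{\sum_k (\bA_{ik}/a^\star_i)^\gamma}.
\end{align*}
Each ratio $\bA_{ij}/a^\star_i$ lies in $[0,1]$, and equals $1$ exactly for $j \in \arg\max_k \bA_{ik}$. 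For any $j$ with $\bA_{ij} < a^\star_i$, the numerator decays to $0$ as $\gamma \to \infty$; for $j \in \arg\max_k \bA_{ik}$, the numerator stays equal to $1$. The denominator therefore tends to $|\arg\max_k \bA_{ik}|$, and the ratio to $1/|\arg\max_k \bA_{ik}|$ on the argmax set and to $0$ elsewhere. In particular, when the maximizer is unique, this reproduces the statement as written; I would add a brief remark that the formula in the proposition should be read as uniform mass $1/|\arg\max_k \bA_{ik}|$ on the argmax set, collapsing to the displayed form whenever that set is a singleton.

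The argument is essentially soft analysis and I do not anticipate a real obstacle; the only care-point, as noted, is handling the possibility of a non-unique argmax and confirming that connectedness prevents a row of $\bA$ from being identically zero, so that both limits are well defined.
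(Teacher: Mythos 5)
Your proof is correct and follows essentially the same route as the paper's: pointwise limits of $(\bA_{ij})^\gamma$ for $\gamma \to 0^+$, and the factor-out-the-maximum (softmax-to-argmax) argument for $\gamma \to \infty$. One point where you are actually \emph{more} careful than the paper: its proof simply asserts that the row converges to a one-hot vector at $j^\star$, silently assuming the maximizer is unique, whereas you correctly observe that the limit is the uniform distribution $1/|\arg\max_k \bA_{ik}|$ over the argmax set. This is not a pedantic distinction here, since for the symmetric normalized adjacency $\bA_{ij} = 1/\sqrt{d_i d_j}$ ties occur whenever two neighbors of $i$ share the same degree (e.g.\ in any regular graph every neighbor is a maximizer, and the $\gamma\to\infty$ limit coincides with the $\gamma\to 0^+$ limit); in that case the displayed formula in the proposition, read literally, would give a row summing to more than $1$. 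Your remark that the statement should be interpreted as uniform mass on the argmax set, collapsing to the stated form when the maximizer is unique, is the right fix and slightly strengthens the paper's argument.
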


\begin{proof}
Let $i$ be a fixed row index. By definition, the entries of $\bA^\gamma$ are given by
\begin{align}
\bA^\gamma_{ij} = \frac{\bA_{ij}^\gamma}{\sum_{k \in \mathcal{N}(i)} \bA_{ik}^\gamma},
\end{align}
where the sum is taken over the neighborhood $\mathcal{N}(i) = {k \mid \bA_{ik} > 0}$. Since $\bA$ is a symmetric normalized adjacency matrix, all its entries are nonnegative and satisfy $\bA_{ij} \in [0,1]$.

We consider two limiting cases:
\paragraph{Case 1: $\gamma \to 0^+$.}
For all $j \in \mathcal{N}(i)$, we have $\bA_{ij}^\gamma \to 1$ as $\gamma \to 0^+$. Therefore, the numerator tends to 1 for each neighbor, and the denominator tends to $|\mathcal{N}(i)|$. As a result,
\begin{align}
\lim_{\gamma \to 0^+} \bA^\gamma_{ij} = \frac{1}{|\mathcal{N}(i)|} \quad \text{if } j \in \mathcal{N}(i),
\end{align}
and zero otherwise. Thus, the row converges to a uniform distribution over the neighbors of node $i$.

\paragraph{Case 2: $\gamma \to \infty$.}
In this case, the exponentiation amplifies differences between entries. Specifically, the largest value $\bA_{ik}$ dominates the sum in the denominator, and the softmax approaches a one-hot distribution centered at the maximum. Let $j^\star \in \arg\max_k \bA_{ik}$. Then,
\begin{align}
\lim_{\gamma \to \infty} \bA^\gamma_{ij} =
\begin{cases}
1 & \text{if } j = j^\star \\
0  & \text{otherwise}
\end{cases},
\end{align}
which completes the proof.
\end{proof}

\begin{theoremAppendix}[\bf Fractional diffusion on feature propagation]
\label{thm:fractional diffusion-A}
Let $\bA$ and $\bA^\gamma$ denote the lazy transition and fractional diffusion operator as defined in \eqref{eq:element-adj-A}, respectively. For any feature matrix $X \in \mathbb{R}^{N \times F}$, define the propagated features by $X^{\gamma} := \bA^\gamma X$. Then for each node $i$, we have
\begin{align*}
\lim_{\gamma \to 0^+} X^{\gamma}_{[i,:]} = \frac{1}{|\mathcal{N}(i)|} \sum_{j \in \mathcal{N}(i)} X_{[j,:]} \quad \mbox{ and } \quad
\lim_{\gamma \to \infty} X^{\gamma}_{[i,:]} = X_{[j^\star,:]},
\end{align*}
where $\mathcal{N}(i) = \{j \mid \bA_{ij} > 0\}$ the neighborhood of $v_i$, and $j^\star \in \arg\max_k \bA_{ik}$ is the index of its strongest neighbor.
\end{theoremAppendix}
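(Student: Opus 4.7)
The plan is to derive this theorem as a direct consequence of Proposition~\ref{prop:limit} (restated as Proposition~A.1), using only the linearity of matrix multiplication and the fact that the sums involved are finite, so the limits can be passed inside.

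First I would expand the propagated feature row by definition of matrix multiplication:
\begin{equation*}
X^{\gamma}_{[i,:]} = \sum_{j=1}^{N} \bA^{\gamma}_{ij}\, X_{[j,:]} = \sum_{j \in \mathcal{N}(i)} \bA^{\gamma}_{ij}\, X_{[j,:]},
\end{equation*}
where the second equality uses the fact that $\bA^{\gamma}_{ij} = 0$ whenever $j \notin \mathcal{N}(i)$ (since $\bA_{ij} = 0$ forces $(\bA_{ij})^{\gamma} = 0$ for all $\gamma > 0$). Because $\mathcal{N}(i)$ is finite, each entry of the row vector $X^{\gamma}_{[i,:]}$ is a finite linear combination of a fixed set of numbers, so $\lim_{\gamma \to c} X^{\gamma}_{[i,:]} = \sum_{j \in \mathcal{N}(i)} \bigl(\lim_{\gamma \to c} \bA^{\gamma}_{ij}\bigr) X_{[j,:]}$ for any limit point $c$.

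Next I would substitute the two limits given by Proposition~A.1. For $\gamma \to 0^+$, every neighbor weight tends to $1/|\mathcal{N}(i)|$, so
\begin{equation*}
\lim_{\gamma \to 0^+} X^{\gamma}_{[i,:]} = \sum_{j \in \mathcal{N}(i)} \frac{1}{|\mathcal{N}(i)|}\, X_{[j,:]} = \frac{1}{|\mathcal{N}(i)|} \sum_{j \in \mathcal{N}(i)} X_{[j,:]},
\end{equation*}
recovering the uniform neighbor average. For $\gamma \to \infty$, the row concentrates on the argmax index $j^\star$, so all terms vanish except the one corresponding to $j^\star$, giving $X_{[j^\star,:]}$.

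The only subtlety, which I would briefly address, is the case where the argmax is not unique, i.e.\ there exist multiple indices $j$ attaining the maximum value of $\bA_{ik}$. In that case Proposition~A.1 should be interpreted as distributing the limiting mass uniformly over the argmax set, and the limit of $X^{\gamma}_{[i,:]}$ becomes the average of $X_{[j,:]}$ over $j \in \arg\max_k \bA_{ik}$; the stated form holds verbatim when the maximum is unique. There is no serious obstacle here because the finiteness of $\mathcal{N}(i)$ makes the limit--sum interchange automatic; the entire argument is a straightforward application of Proposition~A.1 combined with linearity.
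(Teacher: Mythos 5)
Your proposal is correct and follows essentially the same route as the paper's own proof: expand $X^{\gamma}_{[i,:]} = \sum_{j} \bA^{\gamma}_{ij} X_{[j,:]}$, invoke the entrywise limits of $\bA^{\gamma}$ from Proposition~A.1, and pass the limit through the finite sum. Your explicit remark on non-unique argmax (where the limit becomes an average over the tied neighbors) is a small point of added rigor that the paper glosses over, but the argument is otherwise identical.
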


\begin{proof}
The propagated feature vector at node $i$ is defined by
\begin{align*}
X^{\gamma}_{[i,:]} = \sum_{j=1}^N \bA^\gamma_{ij} X_{[j,:]},
\end{align*}
the limiting regimes of $\gamma$ thus read

\paragraph{Case 1: $\gamma \to 0^+$.}
As $\gamma$ approaches zero, we have $\bA^\gamma_{ij} \to 1 / |\mathcal{N}(i)|$ for all $j \in \mathcal{N}(i)$, and zero otherwise. Therefore,
\begin{align*}
X^{\gamma}_{[i,:]} \to \frac{1}{|\mathcal{N}(i)|} \sum_{j \in \mathcal{N}(i)} X_{[j,:]},
\end{align*}
which corresponds to uniform averaging over the neighborhood of node $i$.

\paragraph{Case 2: $\gamma \to \infty$.}
The largest entry in row $i$ of $\bA$ dominates the normalization. Specifically, if $j^\star \in \arg\max_k \bA_{ik}$, then $\bA^\gamma_{ij} \to 1$ if $j = j^\star$, and zero otherwise. Hence,
\begin{align*}
X^{(\gamma)}_{[i,:]} \to X_{[j^\star,:]},
\end{align*}
which corresponds to selecting the feature vector of the strongest neighbor.
\end{proof}

\begin{theoremAppendix}[\bf Convergence of subgraph diffusion] \label{thm:converge-A}
Let $A^{(m)}$ be the adjacency matrix of the $m$-th layer subgraph $\mathcal{G}^{(m)}$, and let $\bA^{\gamma, m}$ denote its fractional diffusion matrix as defined in \eqref{eq:element-adj-A}. Let $x^{(m)} \in \mathbb{R}^{|\mathcal{V}^{(m)}|}$ be the feature vector in channel $\ell$, where missing entries are initialized to zero. Let $\lambda > 0$ denote the retention coefficient, and $M$ be the binary mask vector for observed entries in channel $\ell$. Define the update sequence by
\begin{align*}
x^{(m)}(t) =  x^{(m)}(0) \odot M + \left(\bA^{\gamma,m}x^{(m)}(t-1) + \lambda x^{(m-1)}(K) \right) \odot (1 - M), \quad t = 1, \dots, K
\end{align*}
Then, for sufficiently large $K$, the sequence $x^{(m)}(t)$ converges to a fixed (unique) state.
\end{theoremAppendix}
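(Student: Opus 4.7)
\textbf{Proof plan for Theorem~\ref{thm:converge-A}.} The plan is to reduce the update to an affine recursion on the unobserved coordinates and then invoke a spectral-radius argument. First, I would split the index set of $\mathcal{V}^{(m)}$ into the observed set $O = \{i : M_i = 1\}$ and the unobserved set $U = \{i : M_i = 0\}$, and write $x^{(m)}(t) = (x_O(t), x_U(t))$. The Hadamard mask $M$ immediately forces $x_O(t) = x_O(0)$ for all $t$, so the nontrivial dynamics occur only on $U$. Block-partitioning the fractional diffusion operator as
\begin{equation*}
\bA^{\gamma,m} = \begin{bmatrix} B_{OO} & B_{OU} \\ B_{UO} & B_{UU} \end{bmatrix},
\end{equation*}
the update on $U$ becomes the affine recursion
\begin{equation*}
x_U(t) = B_{UU}\, x_U(t-1) + c, \qquad c := B_{UO}\, x_O(0) + \lambda\, [x^{(m-1)}(K)]_U,
\end{equation*}
where $c$ is a fixed vector that does not depend on $t$.

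Next, I would establish that the spectral radius $\rho(B_{UU}) < 1$, which is the crux of the argument. By construction in \eqref{eq:element-adj-A}, every row of $\bA^{\gamma,m}$ sums to $1$, so $B_{UU}$ is a principal submatrix of a row-stochastic matrix and is therefore entrywise nonnegative and substochastic: $\sum_{j \in U} (B_{UU})_{ij} \le 1$ for every $i \in U$. The structural input I would use is that each connected component of $\mathcal{G}^{(m)}$ containing an unobserved node also contains an observed node, which follows because every node in $\mathcal{G}^{(m)}$ is within graph distance $m$ of $\mathcal{V}_+^{\ell}$ and any such shortest path stays inside $\mathcal{G}^{(m)}$. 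Consequently, within each connected component of $\mathcal{G}^{(m)}$, some unobserved node has an observed neighbor, so the corresponding row of $B_{UU}$ has sum strictly less than $1$ (mass "leaks" into $B_{UO}$).

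I would then apply Perron--Frobenius theory on each irreducible block of $B_{UU}$ arising from one connected component of the subgraph induced by $U$: an irreducible nonnegative substochastic matrix with at least one row of sum strictly less than $1$ has spectral radius strictly less than $1$. Since $\rho(B_{UU})$ is the maximum of the spectral radii over these blocks, $\rho(B_{UU}) < 1$. This immediately implies that $I - B_{UU}$ is invertible and that the affine map $x_U \mapsto B_{UU} x_U + c$ is a contraction in a suitable norm, so $x_U(t) \to (I - B_{UU})^{-1} c$ geometrically, yielding the unique fixed point
\begin{equation*}
x_U^\star = (I - B_{UU})^{-1}\bigl( B_{UO}\, x_O(0) + \lambda\, [x^{(m-1)}(K)]_U \bigr), \qquad x_O^\star = x_O(0).
\end{equation*}

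The main obstacle will be the spectral-radius step, specifically justifying $\rho(B_{UU}) < 1$ cleanly when $B_{UU}$ is reducible. I would handle this by decomposing $B_{UU}$ according to the connected components of the induced subgraph on $U$, arguing that each component either is contained in a component of $\mathcal{G}^{(m)}$ that touches $O$ (and hence leaks mass) or, in the degenerate case where a component of $\mathcal{G}^{(m)}$ has no observed node, is excluded by the $m$-hop expansion construction. Reducing to irreducible substochastic blocks then makes the Perron--Frobenius application rigorous and delivers the convergence claim.
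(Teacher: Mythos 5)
Your proposal is correct and follows the same high-level strategy as the paper's proof: both reduce the masked update to a linear inhomogeneous recurrence, show that the iteration matrix has spectral radius strictly below one, and identify the unique fixed point via $(I-\cdot)^{-1}$ applied to the constant term. The difference lies in the decomposition and, more importantly, in how the spectral bound is justified. The paper keeps the full iteration matrix $G = Q\,\bA^{\gamma,m}$ with $Q = I - \operatorname{diag}(M)$, notes that rows of observed nodes vanish, and then asserts that having at least one row sum strictly less than one forces $\rho(G) < 1$; as stated, that inference is not valid for general substochastic matrices (a reducible substochastic matrix can contain a stochastic block and retain spectral radius one), and the paper only patches this implicitly by assuming the subgraph is connected. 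You instead pass to the unobserved block $B_{UU}$ --- which is equivalent, since after permuting observed indices first $G$ is block lower triangular with a zero block on the observed diagonal, so $\rho(G) = \rho(B_{UU})$ --- then decompose $B_{UU}$ into the irreducible blocks given by connected components of the induced subgraph on $U$, and use the $m$-hop expansion property to show that every such block contains a node adjacent to an observed node, so mass leaks out and Perron--Frobenius gives spectral radius strictly below one block by block. This buys you two things the paper's argument lacks: a rigorous treatment of reducibility at exactly the point where the paper is loose, and the observation that connectivity of all of $\mathcal{G}^{(m)}$ is not needed --- it suffices that each component of $\mathcal{G}^{(m)}$ contains an observed node, which the expansion construction guarantees automatically.
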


\begin{proof}
Fix a feature channel $\ell$, and consider the $m$-th subgraph layer $\mathcal{G}^{(m)}$ with $n = |\mathcal{V}^{(m)}|$ nodes. Let $x^{(m)}$ denotes the vector of $\ell$-th feature values across all nodes in this subgraph
\begin{align*}
x^{(m)} = 
\begin{bmatrix}
x_1^{(m)}[\ell] \\ x_2^{(m)}[\ell] \\ \vdots \\ x_n^{(m)}[\ell]
\end{bmatrix}.    
\end{align*}

Recall $M \in \{0,1\}^{N \times F}$ be the full binary feature mask. Define the channel-wise mask vector $M^{(\ell)} \in \{0,1\}^n$ by $M^{(\ell)}_i := M[i, \ell]$, which indicates whether node $v_i$'s feature in channel $\ell$ is observed.

Let $\bA^{(m)}$ be the symmetric normalized adjacency matrix of the subgraph $\mathcal{G}^{(m)}$, and define the fractional diffusion operator $\bA^{\gamma,m} \in \mathbb{R}^{n \times n}$ as
\[
\bA^{\gamma,m}_{ij} = \frac{(\bA^{(m)}_{ij})^\gamma}{\sum_{k=1}^n (\bA^{(m)}_{ik})^\gamma}.
\]
Since $\gamma > 0$, this defines a row-stochastic matrix with nonnegative entries. Its support coincides with that of $\bA^{(m)}$, and thus matches the topology of the subgraph. If $\mathcal{G}^{(m)}$ is connected, then $\bA^{\gamma,m}$ is the transition matrix of an irreducible and aperiodic Markov chain.

From standard results in Markov chain theory, for any initial vector $v(0)$, the iterates
\[
v{(t)} = (\bA^{\gamma,m})^t v(0)
\]
converge to the unique stationary distribution.

Now consider the masked and retained update. Define the diagonal projection matrices
\begin{align*}
P := \operatorname{diag}(M^{(\ell)}), \quad Q := I - P = \operatorname{diag}(1 - M^{(\ell)}).    
\end{align*}
Then the update rule can be written as
\[
x^{(m)}(t) = P x^{(m)}(0) + Q \left( \bA^{\gamma,m} x^{(m)}(t-1) + \lambda x^{(m-1)}(K) \right),
\]
where $\lambda > 0$ is the retention factor and $x^{(m-1)}(K)$ is the converged estimate from the previous layer. This is a linear, inhomogeneous recurrence of the form
\begin{align*}
x^{(m)}(t) = G x^{(m)}(t-1) + b,    
\end{align*}
with
\[
G := Q \bA^{\gamma,m}, \quad b := P x^{(m)}(0) + \lambda Q x^{(m-1)}(K).
\]

Since $\bA^{\gamma,m}$ is row-stochastic and $Q$ is diagonal with entries in $[0,1]$, the matrix $G$ is sub-stochastic -- its rows have nonnegative entries and sum to at most 1. Furthermore, any row corresponding to an observed feature (i.e., $M^{(\ell)}_i = 1$) satisfies $Q_{ii} = 0$, so the corresponding row of $G$ is all zeros. If at least one feature is observed (i.e., $M^{(\ell)} \ne 0$), then $G$ has at least one row with sum strictly less than 1, and thus its spectral radius satisfies $\rho(G) < 1$.

Since $\rho(G) < 1$, the recurrence converges to a unique fixed point, i.e., 
\begin{align*}
x^{(m)}(\infty) = (I - G)^{-1} b.    
\end{align*}

This proves that the update converges for each feature channel $\ell$, in every subgraph layer $m$, as long as the underlying subgraph is connected and $\gamma < \infty$.
\end{proof}

\begin{theoremAppendix}[\bf Global convergence via progressive subgraph expansion]
\label{thm:global converge-A}
Let $\mathcal{G}^{(m)}$ be the $m$-hop expansion of the observed node set $\mathcal{V}_+^{\ell}$ in channel $\ell$, and let $x^{(m)}$ be the corresponding feature estimate after applying the masked fractional diffusion update defined in Theorem~\ref{thm:converge-A}. Assume the graph $\mathcal{G}$ is connected and the diffusion sharpness parameter $\gamma$ is finite. Then as $m \to M_{\max}$, where $\mathcal{G}^{(m)} \to \mathcal{G}$, the final estimate $x^{(m)}(\infty)$ converges to the steady-state solution of the full-graph diffusion update.
\end{theoremAppendix}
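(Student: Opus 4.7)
The plan is to combine the per-layer fixed-point characterization established in Theorem~\ref{thm:converge-A} with the nested structure of the subgraph expansion, and then take a limit in the layer index $m$. Since $\mathcal{G}$ is connected and finite, its diameter $M_{\max}$ is finite, and by construction $\mathcal{G}^{(0)} \subseteq \mathcal{G}^{(1)} \subseteq \cdots \subseteq \mathcal{G}^{(M_{\max})} = \mathcal{G}$. In particular, at the terminal layer the layer-$m$ fractional diffusion operator $\bA^{\gamma,m}$ coincides with the global fractional operator $\bA^{\gamma}$ defined on the full graph, and the layer-$m$ projectors $P^{(m)}, Q^{(m)}$ reduce to the full-graph projectors $P, Q$ onto the observed/unobserved entries in channel $\ell$.

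Next, I would apply Theorem~\ref{thm:converge-A} at each $m$ to write the layer-$m$ fixed point as $x^{(m)}(\infty) = (I - G^{(m)})^{-1} b^{(m)}$, with $G^{(m)} := Q^{(m)} \bA^{\gamma,m}$ and $b^{(m)} := P^{(m)} x^{(m)}(0) + \lambda\, Q^{(m)} x^{(m-1)}(K)$. Because $\gamma < \infty$ and each $\mathcal{G}^{(m)}$ contains at least one observed coordinate, $\rho(G^{(m)}) < 1$ and each inverse is well-defined. I would then proceed by induction on $m$: the inductive step shows that $x^{(m)}(\infty)$, when restricted to the support of the previous subgraph $\mathcal{V}^{(m-1)}$, agrees with $x^{(m-1)}(\infty)$, so that the retention term $\lambda x^{(m-1)}(K)$ at layer $m$ supplies exactly the boundary values that a direct diffusion on $\mathcal{G}^{(m)}$ would impose across the newly added radius-$m$ shell.

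Taking $m = M_{\max}$, this yields the identity $(I - Q\bA^{\gamma}) x^{(M_{\max})}(\infty) = P x(0) + \lambda Q x^{(M_{\max}-1)}(K)$, and I would show that this coincides with the fixed-point equation of the full-graph masked fractional diffusion update, obtained by running the same update rule on $\mathcal{G}$ without subgraph restriction. The agreement relies on two observations: (i) uniqueness of the fixed point from $\rho(Q\bA^{\gamma}) < 1$, so any candidate satisfying the same linear system must coincide with $x^{(M_{\max})}(\infty)$; and (ii) consistency of the inductively propagated boundary data with the full-graph operator, which follows from the fact that beyond radius $M_{\max}$ no additional nodes exist and the retention contribution no longer introduces new information.

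The main obstacle is handling the retention term $\lambda x^{(m-1)}(K)$, which does not appear in a naive full-graph diffusion and could in principle produce a different steady state. I anticipate that the cleanest resolution is to show that, at $m = M_{\max}$, the retention acts only through components already determined by prior layers, so it can be absorbed into $P x(0)$ as an effective (but equivalent) boundary condition, leaving the unique full-graph fixed point unchanged. A subtler variant of the same obstruction is bookkeeping the support of $Q^{(m)}$ as $m$ grows, since newly included nodes switch from ``absent'' to ``unobserved''; this will require a careful decomposition of the update matrix into blocks aligned with the radius-$m$ shells, which is routine but notation-heavy and where most of the technical bookkeeping will live.
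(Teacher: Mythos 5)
Your overall skeleton (nested subgraphs, per-layer fixed points from Theorem~\ref{thm:converge-A}, identification at the terminal layer $m = M_{\max}$) matches the paper's proof, but your inductive consistency claim is false, and that is a genuine gap. You claim that $x^{(m)}(\infty)$ restricted to $\mathcal{V}^{(m-1)}$ agrees with $x^{(m-1)}(\infty)$. Counterexample: take the path $v_1 - v_2 - v_3$ with only $v_1$ observed. At $m=1$ the subgraph is the single edge $(v_1,v_2)$, and the fixed point gives $x_2^{(1)}(\infty) = x_1$ (the retention input from layer $0$ is zero). At $m=2$ the full path appears; the degree of $v_2$ changes from $1$ to $2$, row $2$ of the fractional operator becomes $(\tfrac12, 0, \tfrac12)$ for every $\gamma$, and the fixed-point system $x_2 = \tfrac12 x_1 + \tfrac12 x_3 + \lambda x_1$, $x_3 = x_2$ yields $x_2^{(2)}(\infty) = (1+2\lambda)\,x_1 \neq x_1$ whenever $\lambda > 0$. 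The failure is structural: when the shell grows, boundary nodes acquire new neighbors and new degrees, so the rows of $\bA^{\gamma,m}$ indexed by $\mathcal{V}^{(m-1)}$ are not the rows of $\bA^{\gamma,m-1}$, and the retention term only biases toward, never enforces, the previous solution. The paper never makes this exact-agreement claim; it asserts agreement only ``up to boundary conditions that vanish as the subgraph expands,'' and its conclusion ultimately rests on the terminal layer alone.

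Your treatment of the retention obstacle also cannot be repaired in the form you propose. If the ``full-graph diffusion update'' is taken retention-free, i.e.\ $x = Px(0) + Q\bA^{\gamma}x$, its unique fixed point is $(I - Q\bA^{\gamma})^{-1}Px(0)$, while the terminal-layer output is $(I - Q\bA^{\gamma})^{-1}\bigl(Px(0) + \lambda Q x^{(M_{\max}-1)}(K)\bigr)$; the discrepancy $(I - Q\bA^{\gamma})^{-1}\lambda Q x^{(M_{\max}-1)}(K)$ is generically nonzero (in the path example it equals $2\lambda x_1$ on both missing nodes), and no ``absorption into an equivalent boundary condition'' can remove it, since $Px(0)$ is pinned by the data. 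The paper sidesteps this by definition: in its proof the global steady state $x^\star$ solves $x^\star = Px(0) + Q(\bA^{\gamma}x^\star + \lambda x^{\mathrm{prev}})$, i.e.\ the full-graph update itself carries the retention term, and at $m = M_{\max}$ the layer operator, mask, and retention input literally coincide with the global ones, so the identification follows at once from the uniqueness you correctly noted in observation (i). If you adopt that (retention-inclusive) target, your proof closes immediately and the problematic induction is unnecessary; if you insist on the retention-free target, the statement you are trying to prove is false for $\lambda > 0$.
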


\begin{proof}
Fix a feature channel $\ell$, and consider sequence of subgraphs $\{\mathcal{G}^{(m)}\}$ defined by expanding the observed node set $\mathcal{V}_+^{\ell}$ in increasing $m$-hop neighborhoods. Each subgraph $\mathcal{G}^{(m)} = (\mathcal{V}^{(m)}, \mathcal{E}^{(m)})$ satisfies
\begin{align*}
\mathcal{V}^{(m)} \subseteq \mathcal{V}^{(m+1)}, \quad \mathcal{E}^{(m)} \subseteq \mathcal{E}^{(m+1)}, \quad \text{and} \quad \bigcup_m \mathcal{G}^{(m)} = \mathcal{G}.    
\end{align*}

Let $x^{(m)}(\infty)$ denote the steady-state solution obtained by applying the masked and retained update on $\mathcal{G}^{(m)}$, which exists by Theorem~\ref{thm:converge-A}. Since each update only modifies nodes in $\mathcal{V}^{(m)}$, and each $\mathcal{V}^{(m)}$ is strictly contained in the next, the sequence $\{x^{(m)}(\infty)\}_m$ defines an expanding approximation of the solution on the full graph.

We now define the full-graph update. Let $\mathcal{G} = (\mathcal{V}, \mathcal{E})$ be the complete graph, and let $\bA^{\gamma} \in \mathbb{R}^{N \times N}$ be the full fractional diffusion matrix defined over all $N$ nodes
\[
\bA^{\gamma}_{ij} = \frac{(\bA_{ij})^\gamma}{\sum_{k=1}^{N} (\bA_{ik})^\gamma},
\]
where $\bA$ is the symmetric normalized adjacency matrix of $\mathcal{G}$. Let $x^\star$ be the solution to the global masked and retained update
\[
x^\star = P x{(0)} + Q \left( \bA^{\gamma} x^\star + \lambda x^{\mathrm{prev}} \right),
\]
where $P = \operatorname{diag}(M^{(\ell)})$, $Q = I - P$, and $x{(0)}$ is the initial feature vector for this channel.

Now, observe that for any fixed node $i$, there exists a minimal radius $m_i$ such that $i \in \mathcal{V}^{(m)}$ for all $m \ge m_i$. Beyond this point, the update for $x_i^{(m)}(\infty)$ is computed using the same local topology, diffusion weights, and mask structure as in the full graph. Hence, for each node $i$, the local solution on $\mathcal{G}^{(m)}$ matches the restriction of the global solution, up to boundary conditions that vanish as the subgraph expands.

Furthermore, the retention mechanism ensures that values propagated from earlier layers remain stable and reinforce prior estimates. Because the number of nodes is finite and each node is eventually included in some $\mathcal{G}^{(m)}$, we conclude that $$\lim_{m \to M_{\max}} x^{(m)}(\infty) = x^\star,$$ where $x^\star$ is the fixed point of the global masked diffusion update.
\end{proof}

\subsection{Experimental analyses:}

\subsubsection{Ablation Study}
\label{sec:ablation study}
To evaluate the contribution of each component in FSD-CAP, we perform ablation studies under the structural missing setting with a $99.5\%$ missing rate. Experiments are conducted on both semi-supervised node classification and link prediction tasks across all datasets. We compare the full model to the following variants, each with one component removed
\begin{enumerate}
\item {\bf w/o\_Frac\_oper:} Removes the fractional diffusion operator.
\item {\bf w/o\_Prog\_subg:} Removes the progressive subgraph diffusion.
\item {\bf w/o\_Class\_refine:} Removes the class-level feature refinement.
\end{enumerate}

The results of the ablation studies are reported in Table~\ref{tab:ablation study} for node classification and Table~\ref{tab:ablation study on linkpred} for link prediction. In both tasks, removing any single component from FSD-CAP results in a consistent drop in performance across all datasets, confirming the contribution of each module. The performance degradation is especially pronounced in the link prediction setting.

\textbf{w/o\_Frac\_oper.} Removing the fractional diffusion operator results in an average accuracy drop of $2.02\%$ in node classification, with a larger effect observed on denser co-purchase datasets such as Photo and Computers (Table~\ref{tab:ablation study}). Its impact is even more pronounced in link prediction, where the average AUC decreases by $5.60\%$ across five datasets (Table~\ref{tab:ablation study on linkpred}). These results highlight the role of the fractional operator in adapting diffusion sharpness to local structure and demonstrate its importance in both tasks.

\textbf{w/o\_Prog\_subg.} Eliminating the progressive subgraph diffusion mechanism results in a smaller average performance loss compared to the other two components. However, as shown in Table~\ref{tab:ablation study on linkpred}, it leads to noticeable degradation on link prediction for datasets such as Cora and CiteSeer. The progressive expansion strategy helps reduce error accumulation during diffusion by gradually incorporating nodes, improving stability and overall performance.

\textbf{w/o\_Class\_refine.} The class-level refinement has the most significant impact on node classification performance (Table~\ref{tab:ablation study}). Its removal particularly affects sparse graphs with multiple classes, such as Cora (7 classes) and CiteSeer (6 classes). By constructing class-specific anchor nodes and propagating within class-specific graphs, this component enhances inter-class separation and strengthens the discriminative power of imputed features.

Together, these results demonstrate that all three components play complementary roles. Their combination is essential for achieving robust and accurate imputation under extreme sparsity.

\begin{table}[htb!]
  \caption{Ablation study on semi-supervised node classification. Best values are in bold.}
  \label{tab:ablation study}
  \centering
  \setlength{\tabcolsep}{3pt}
  \fontsize{7.3}{10}\selectfont
  \begin{tabular}{c|ccccc|c}
    \toprule
    \multicolumn{1}{c|}{}
    & Cora & CiteSeer & PubMed & Photo & Computers & Average\\
    \midrule
    FSD-CAP     & \textbf{80.56\%}     & \textbf{71.94\%}     & \textbf{76.98\%}     & \textbf{89.18\%}     & \textbf{81.64\%} & \textbf{80.06\%} \\
    w/o\_Frac\_oper     & 79.95\% (-0.61\%)     & 71.82\% (-0.12\%)    & 75.54\% (-1.44\%)    & 86.27\% (-2.91\%)    & 76.60\% (-5.04\%)  & 78.04\% (-2.02\%)\\
    w/o\_Prog\_subg     & 79.34\% (-1.22\%)    & 70.84\% (-1.10\%)   & 75.97\% (-1.01\%)    & 88.07\% (-1.11\%)    & 79.14\% (-2.50\%)  & 78.67\% (-1.39\%)\\
    w/o\_Class\_refine     & 76.43\% (-4.13\%)    & 68.65\% (-3.29\%)    & 75.52\% (-1.46\%)    & 88.10\% (-1.08\%)    & 78.06\% (-3.58\%)  & 77.35\% (-2.71\%) \\

    \bottomrule
  \end{tabular}
\end{table}

\begin{table}[htb!]
  \caption{Ablation study on link prediction, with best performance highlighted in bold.}
  \label{tab:ablation study on linkpred}
  \centering
  \setlength{\tabcolsep}{2pt}
  \fontsize{7}{10}\selectfont
  \begin{tabular}{c|c|ccccc|c}
    \toprule
     & Metric & Cora & CiteSeer & PubMed & Photo & Computers & Average\\
    \midrule
    
    \multirow{2}{*}{FSD-CAP} & AUC & \textbf{87.97\%} & \textbf{87.48\%} & \textbf{88.39\%} & \textbf{97.55\%} & \textbf{96.85\%} & \textbf{91.65\%} \\
    & AP & \textbf{88.80\%} & \textbf{87.82\%} & \textbf{85.54\%} & \textbf{97.32\%} & \textbf{96.83\%} & \textbf{91.26\%} \\
    \hline
    \multirow{2}{*}{w/o\_Frac\_oper} & AUC & 80.54\% (-7.43\%) & 79.07\% (-8.41\%) & 85.32\% (-3.07\%) & 95.94\% (-1.61\%) & 89.39\% (-7.46\%) & 86.05\% (-5.60\%) \\
    & AP & 82.65\% (-6.15\%) & 81.74\% (-6.08\%) & 84.81\% (-0.73\%) & 95.46\% (-1.86\%) & 89.99\% (-6.84\%) & 86.93\% (-4.33\%) \\
    \hline
    \multirow{2}{*}{w/o\_Prog\_subg} & AUC & 82.67\% (-5.30\%) & 79.71\% (-7.77\%) & 86.10\% (-2.29\%) & 96.47\% (-1.08\%) & 95.75\% (-1.10\%) & 88.14\% (-3.51\%) \\
    & AP & 84.48\% (-4.32\%) & 80.67\% (-7.15\%) & 84.71\% (-0.83\%) & 96.16\% (-1.16\%) & 95.68\% (-1.15\%) & 88.34\% (-2.92\%) \\
    \hline
    \multirow{2}{*}{w/o\_Class\_refine} & AUC & 83.00\% (-4.97\%) & 79.58\% (-7.90\%) & 84.90\% (-3.49\%) & 96.49\% (-1.06\%) & 94.71\% (-2.14\%) & 87.74\% (-3.91\%) \\
    & AP & 84.69\% (-4.11\%) & 80.76\% (-7.06\%) & 83.00\% (-2.54\%) & 96.26\% (-1.06\%) & 94.63\% (-2.20\%) & 87.87\% (-3.39\%) \\
    
    \bottomrule
  \end{tabular}
\end{table}

\subsubsection{Parameter Analysis}\label{sec:parameter analysis}
To assess the sensitivity of FSD-CAP to its key parameters ($\gamma$, $\lambda$, and $T$), we conduct controlled experiments on both semi-supervised node classification and link prediction. We use the structural missing setting with a $99.5\%$ feature missing rate, evaluating on Cora and the denser Photo dataset.

In the fractional subgraph diffusion (FSD) stage, information is propagated progressively from observed to unobserved nodes by expanding the subgraph. The fractional exponent $\gamma$ controls the sharpness of diffusion: higher values lead to more localized propagation. The retention coefficient $\lambda$ determines the weight of previous estimates in the update. The temperature $T$ is used in the class-aware refinement stage to modulate the influence of class-level anchors.

We examine the joint effect of the fractional diffusion parameter $\gamma$ and the retention coefficient $\lambda$ on model performance, with temperature $T$ fixed. Figure~\ref{fig:ACC_Cora_Photo} shows classification accuracy on Cora and Photo as these parameters vary. On Cora, $\gamma$ is swept from $0.6$ to $1.6$ (step size $0.2$), while on Photo it ranges from $2.0$ to $4.0$ with the same step. In both cases, $\lambda$ is varied from $0$ to $1$ in increments of $0.1$. Results indicate that Cora achieves higher accuracy when $\lambda \in [0.2, 0.4]$ and $\gamma \in [1.2, 1.4]$, suggesting that incorporating estimates from previous diffusion layers improves imputation quality. On the denser Photo dataset, accuracy improves when $\gamma$ is set around $2.6$ to $3.0$, reflecting the benefit of sharper, edge-concentrated diffusion in dense graphs.

\begin{figure}[htb!]
    \centering
    \includegraphics[width=0.9\linewidth]{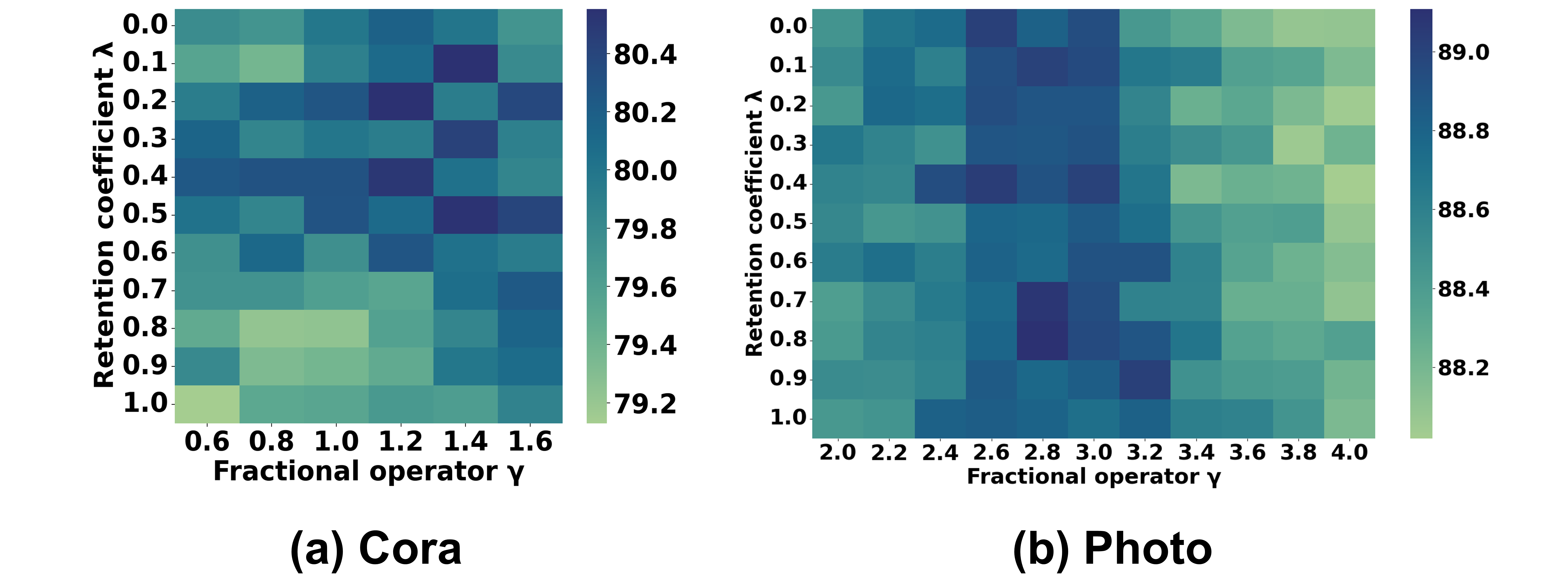}
    \caption{Sensitivity of node classification accuracy to the retention coefficient $\lambda$ and fractional exponent $\gamma$ on Cora and Photo datasets.}  
    \label{fig:ACC_Cora_Photo}
    \vspace{-0.1in}
\end{figure}

\begin{figure}[htb!]
    \centering
    \includegraphics[width=0.8\linewidth]{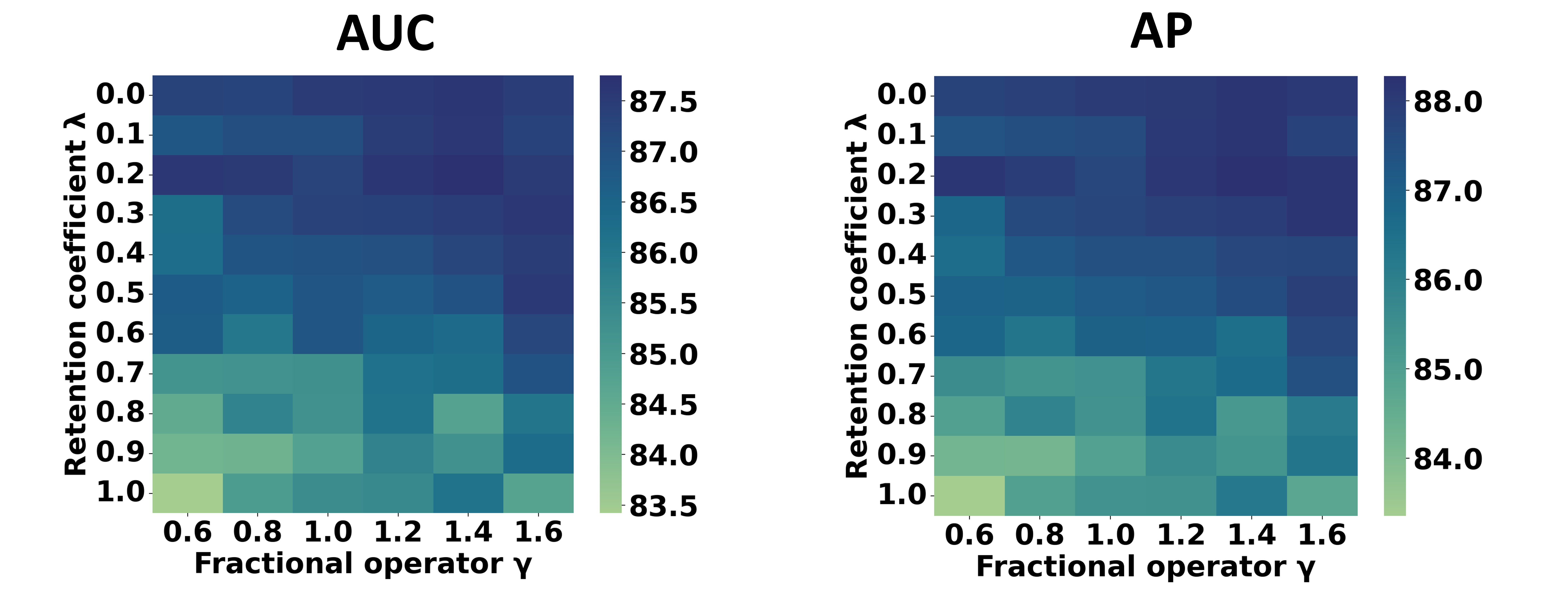}
    \caption{Sensitivity of link prediction performance to the retention coefficient $\lambda$ and fractional exponent $\gamma$ on the Cora dataset.}  
    \label{fig:Cora_AUC_AP}
    \vspace{-0.1in}
\end{figure}

\begin{figure}[htb!]
    \centering
    \includegraphics[width=1\linewidth]{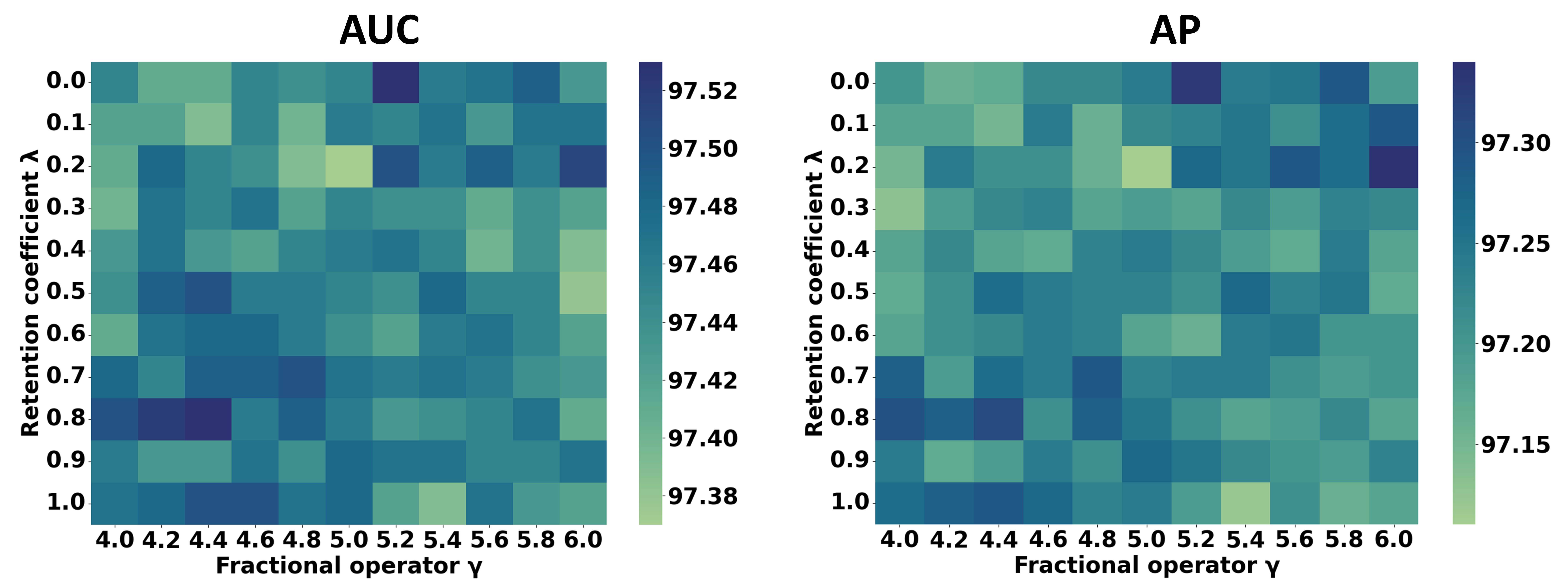}
    \caption{Sensitivity of link prediction performance to the retention coefficient $\lambda$ and fractional exponent $\gamma$ on the Photo dataset.}  
    \label{fig:Photo_AUC_AP}
    \vspace{-0.1in}
\end{figure}

Figures~\ref{fig:Cora_AUC_AP} and~\ref{fig:Photo_AUC_AP} present AUC and AP scores for link prediction on Cora and Photo, respectively. The trends are consistent with those observed in node classification, confirming that performance is sensitive to the choice of $\lambda$ and $\gamma$. On Cora, smaller values of $\lambda$ yield better results. On Photo, higher values of $\gamma$ improve performance more noticeably than in the classification setting, suggesting that localized diffusion captures structure relevant to link formation. Modulating diffusion strength across local neighborhoods allows FSD-CAP to better exploit available information. These sensitivity results further validate the role of each component in FSD-CAP. The retention mechanism stabilizes the diffusion process by preserving accurate estimates from earlier layers, reducing error propagation. The fractional operator adjusts the relative influence of neighboring nodes based on local graph structure, helping to mitigate over-smoothing and improve overall imputation quality.

In the class-aware propagation (CAP) step, a GCN-based classifier assigns pseudo-labels to unlabeled nodes. These labels are used to construct class-specific features and class-wise graphs for targeted feature refinement. The temperature parameter $T$ is applied to the softmax function to control the sharpness of the output distribution. Lower values of $T$ produce sharper, more confident predictions; higher values yield smoother distributions. When $T = 1$, the standard softmax is recovered.
\begin{figure}[htb!]
    \centering
    \includegraphics[width=1\linewidth]{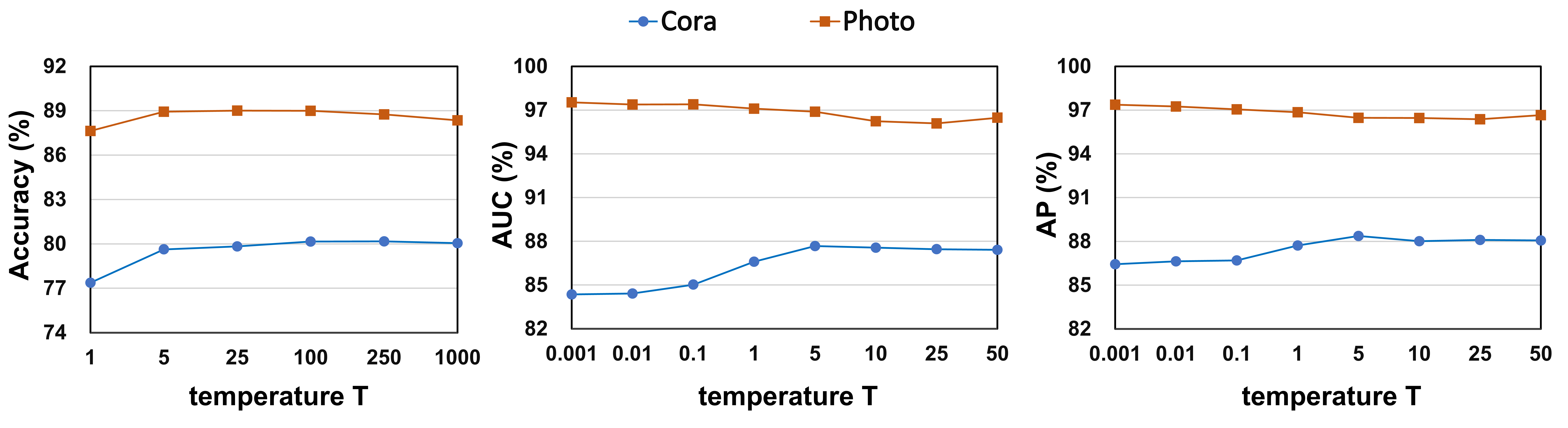}
    \caption{Sensitivity of node classification and link prediction performance to temperature parameter $T$ on the Cora and Photo datasets.}  
    \label{fig:T}
    \vspace{-0.1in}
\end{figure}

Figure~\ref{fig:T} shows the effect of the temperature parameter $T$ on model performance across accuracy, AUC, and AP metrics for the Cora and Photo datasets. On Cora, both node classification and link prediction achieve optimal performance near $T = 5$. On the denser Photo dataset, the optimal value of $T$ differs by task: $T \approx 25$ yields the best node classification accuracy, while $T = 0.001$ gives the highest link prediction scores. At low $T$, the pseudo-label distribution becomes sharply peaked, resulting in high-confidence weights during class-aware propagation. This reduces the influence of class-level features and increases the emphasis on individual node characteristics. For link prediction on dense graphs, this behavior is beneficial, as rich local structure can guide accurate edge inference. In contrast, node classification benefits from smoother pseudo-labels (higher $T$), which reduce intra-class noise and improve class-level separation. These results highlight the importance of tuning $T$: relative to the default $T = 1$, appropriate values can significantly improve performance on both tasks.

\subsubsection{Accuracy Comparison on Different Distance}
\label{sec:different distance}
To assess how distance from observed features impacts classification accuracy, we evaluate performance under the structural missing setting with a $99.5\%$ missing rate. Nodes are grouped by their shortest-path distance to the nearest node with observed features, and average accuracy is computed for each group. We compare FSD-CAP to the global diffusion baseline FP across five datasets. Figures~\ref{fig:distance_Cora} and~\ref{fig:distance_Photo} present accuracy as a function of distance for sparse and dense graphs, respectively.

\begin{figure}[htb!]
    \centering
    \includegraphics[width=0.9\linewidth]{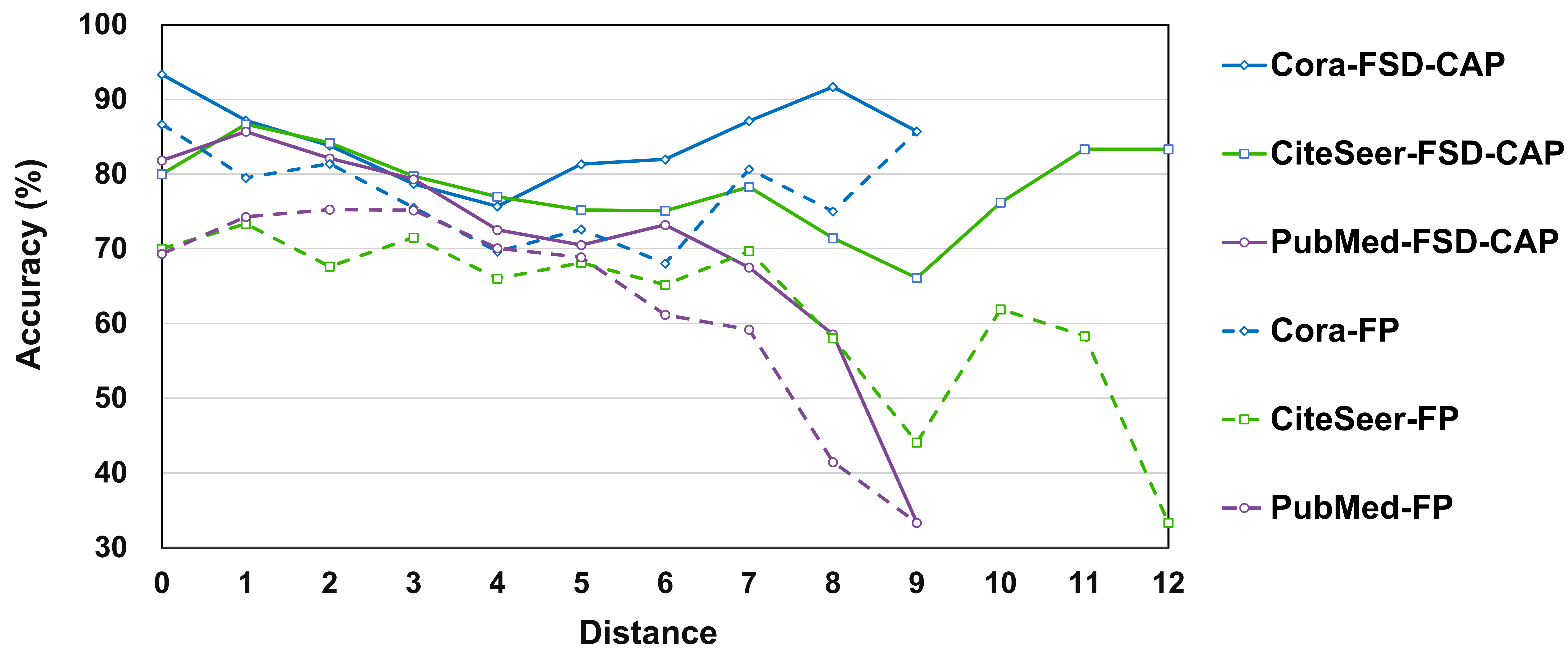}
    \caption{Node classification accuracy ($\%$) of FP and FSD-CAP grouped by shortest-path distance to the nearest node with observed features, evaluated on Cora, CiteSeer, and PubMed.}  
    \label{fig:distance_Cora}
    \vspace{-0.1in}
\end{figure}

\begin{figure}[htb!]
    \centering
    \includegraphics[width=0.7\linewidth]{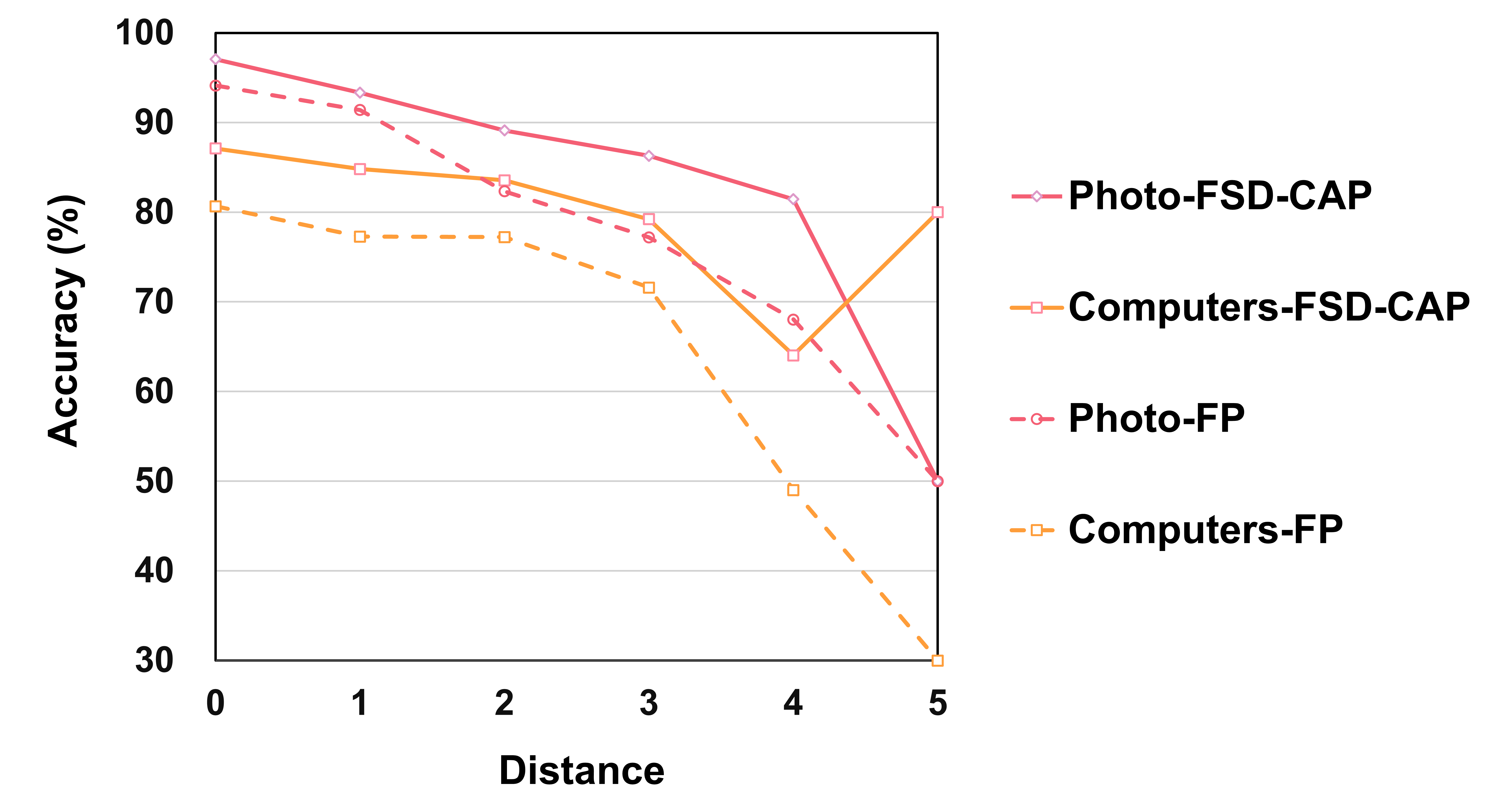}
    \caption{Node classification accuracy ($\%$) of FP and FSD-CAP as a function of shortest-path distance to the nearest observed node, evaluated on the Photo and Computers datasets.}  
    \label{fig:distance_Photo}
    \vspace{-0.1in}
\end{figure}

As shown in Figure~\ref{fig:distance_Photo}, the Photo and Computers datasets are relatively dense, with most missing-feature nodes located within five hops of observed nodes. In contrast, the sparser datasets—Cora, CiteSeer, and PubMed (Figure~\ref{fig:distance_Cora})—exhibit much larger maximum distances between missing and observed nodes. Across all datasets, both FP and FSD-CAP show a general decline in accuracy as distance increases. In some cases, accuracy temporarily rises at longer distances. This is explained by the node distribution: most nodes lie within a few hops of observed regions, while distant nodes are few (Figure~\ref{fig:distance_number}), which inflates the average accuracy of these small groups.

This trend confirms that diffusion-based methods perform better for nodes near observed features. Compared to FP, which applies global propagation, FSD-CAP reduces accuracy degradation at greater distances and improves performance for nearby nodes. These results support the effectiveness of progressive subgraph diffusion, which focuses propagation on reliable local neighborhoods and limits error accumulation, leading to more stable and accurate imputation.

\begin{figure}[htb!]
    \centering
    \includegraphics[width=0.8\linewidth]{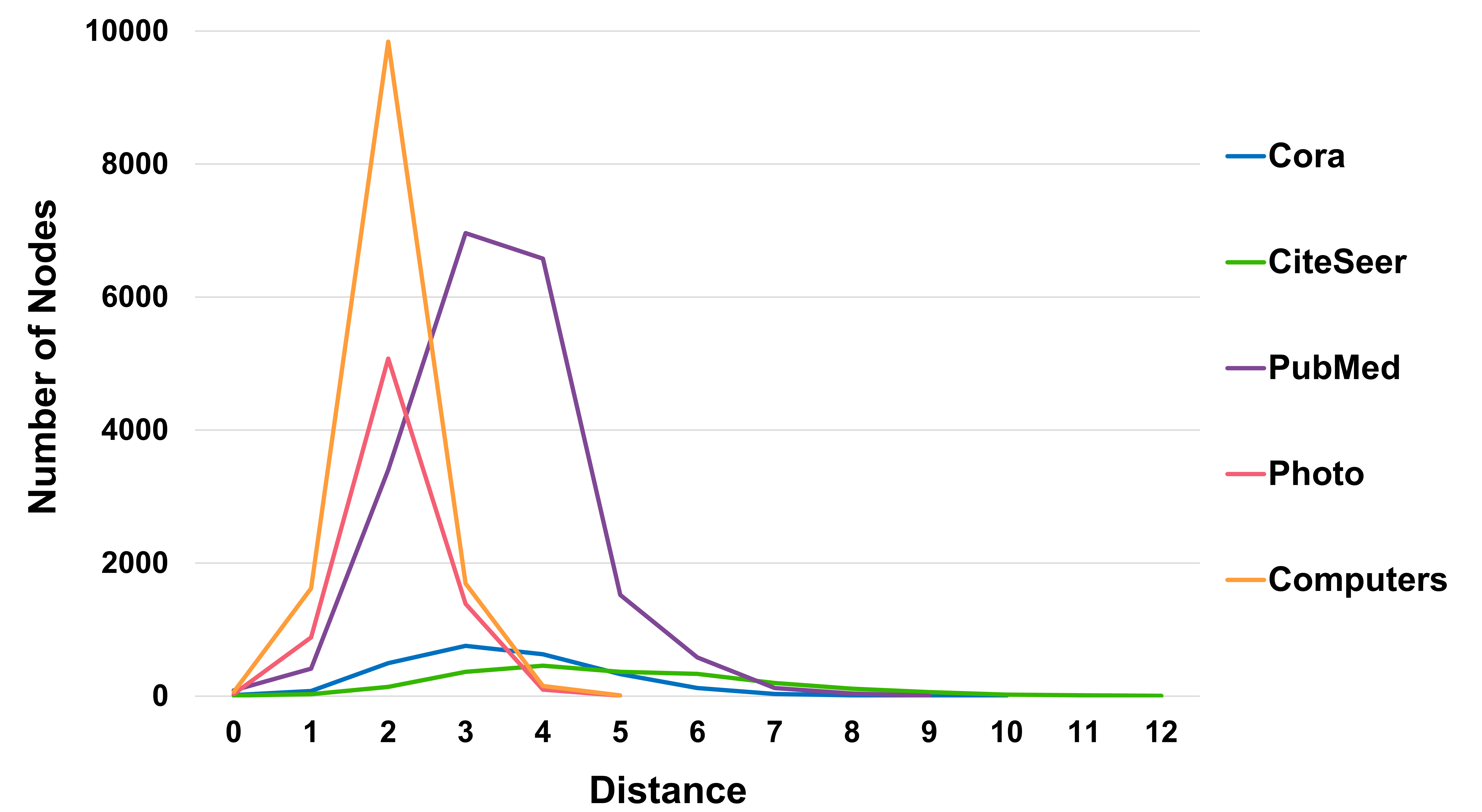}
    \caption{Number of nodes at each shortest-path distance from the nearest node with observed features across the five datasets.}  
    \label{fig:distance_number}
    \vspace{-0.1in}
\end{figure}

\subsubsection{Class-Level Feature Similarity Analysis}\label{sec:class-level similarity}
To analyze the class-level structure of features imputed by FSD-CAP, we perform experiments under the structural missing setting with a $99.5\%$ missing rate. Cosine similarity is used to measure both intra-class and inter-class feature similarity. We compute these metrics on the imputed features and compare them to those obtained from the original, fully observed features.

\begin{table}[htb!]
  \caption{Inter-class and intra-class cosine similarities of original features.}
  \label{tab:feature simi original}
  \centering
  \setlength{\tabcolsep}{2.65pt}
  \fontsize{8}{12}\selectfont 
  \begin{tabular}{c|c|cccccccccc|c|c}
    \toprule
    \multirow{2}{*}{Dataset} & \multirow{2}{*}{Inter-class} & \multicolumn{10}{c|}{Intra-class} & \multirow{2}{*}{Average} & \multirow{2}{*}{Ratio} \\
    \cline{3-12}
     & & class1 & class2 & class3 & class4 & class5 & class6 & class7 & class8 & class9 & class10 & & \\
    \midrule
    Cora & 0.054 & 0.087 & 0.117 & 0.092 & 0.070 & 0.071 & 0.089 & 0.116 & - & - & - & 0.091 & 1.70 \\
    CiteSeer & 0.042 & 0.053 & 0.063 & 0.063 & 0.067 & 0.078 & 0.061 & - & - & - & - & 0.064 & 1.54 \\
    PubMed & 0.063 & 0.112 & 0.094 & 0.078 & - & - & - & - & - & - & - & 0.094 & 1.51 \\
    Photo & 0.337 & 0.282 & 0.495 & 0.331 & 0.435 & 0.354 & 0.381 & 0.289 & 0.354 & - & - & 0.365 & 1.08 \\
    Computers & 0.348 & 0.371 & 0.347 & 0.442 & 0.390 & 0.300 & 0.449 & 0.534 & 0.455 & 0.393 & 0.414 & 0.409 & 1.17 \\
      
    \bottomrule
\end{tabular}
\end{table}

\begin{table}[htb!]
  \caption{Inter-class and intra-class cosine similarities of features imputed by FSD-CAP under the structural missing setting with a missing rate of $99.5\%$.}
  \label{tab:feature simi fsd-cap}
  \centering
  \setlength{\tabcolsep}{2.65pt}
  \fontsize{8}{12}\selectfont 
  \begin{tabular}{c|c|cccccccccc|c|c}
    \toprule
    \multirow{2}{*}{Dataset} & \multirow{2}{*}{Inter-class} & \multicolumn{10}{c|}{Intra-class} & \multirow{2}{*}{Average} & \multirow{2}{*}{Ratio} \\
    \cline{3-12}
     & & class1 & class2 & class3 & class4 & class5 & class6 & class7 & class8 & class9 & class10 & & \\
    \midrule
    Cora & 0.827 & 0.937 & 0.986 & 0.992 & 0.907 & 0.828 & 0.887 & 0.950 & - & - & - & 0.926 & 1.12 \\
    CiteSeer & 0.671 & 0.854 & 0.817 & 0.947 & 0.868 & 0.962 & 0.958 & - & - & - & - & 0.901 & 1.34 \\
    PubMed & 0.905 & 0.930 & 0.961 & 0.938 & - & -
    & - & - & - & - & - & 0.943 & 1.04 \\
    Photo & 0.884 & 0.983 & 0.984 & 0.987 & 0.992 & 0.995 & 0.978 & 0.991 & 0.992 & - & - & 0.988 & 1.12 \\
    Computers & 0.868 & 0.994 & 0.867 & 0.982 & 0.923 & 0.928 & 0.979 & 0.973 & 0.966 & 0.920 & 0.975 & 0.946 & 1.09 \\
      
    \bottomrule
\end{tabular}
\end{table}

Tables~\ref{tab:feature simi original} and~\ref{tab:feature simi fsd-cap} report intra-class and inter-class cosine similarity for the original and imputed features, respectively. ``Average'' denotes the mean intra-class similarity across all classes. The ``Ratio'' is defined as the average intra-class similarity divided by the inter-class similarity. A ratio greater than 1 indicates better class separability.

As shown in Table~\ref{tab:feature simi original}, the original features consistently yield higher intra-class similarity than inter-class similarity across all datasets, confirming their strong class-discriminative structure. Table~\ref{tab:feature simi fsd-cap} shows that FSD-CAP preserves this structure even under $99.5\%$ feature missing. In all cases, the ratio remains above 1, indicating that the imputed features retain meaningful class-level separation.

These results confirm that FSD-CAP maintains discriminative feature geometry under extreme sparsity. As shown in Table~\ref{tab:FSD-CAP_All_node_cls} (Appendix~\ref{sec:robustness analysis}), the resulting classification performance is comparable to, and in some cases exceeds, that of the fully observed setting.

\subsubsection{t-SNE Visualization}
Under the structural missing setting with a $99.5\%$ feature missing rate, we reconstruct node features using two diffusion-based baselines (FP and PCFI) and our proposed FSD-CAP. We then apply t-SNE to project the imputed features into two dimensions for qualitative comparison.

Figures~\ref{fig:tsne_Cora} and~\ref{fig:tsne_Photo} show the t-SNE visualizations across five datasets, with nodes colored by ground-truth class labels. FSD-CAP produces more compact and well-separated clusters than FP and PCFI. Nodes from the same class form cohesive groups with clearer boundaries between classes. These visual patterns align with the similarity analysis in Appendix~\ref{sec:class-level similarity}, further demonstrating FSD-CAP’s ability to preserve class structure and support downstream tasks such as node classification.

\begin{figure}[htb!]
    \centering
    \includegraphics[width=1\linewidth, trim={0.2cm 0 2cm 0},clip]{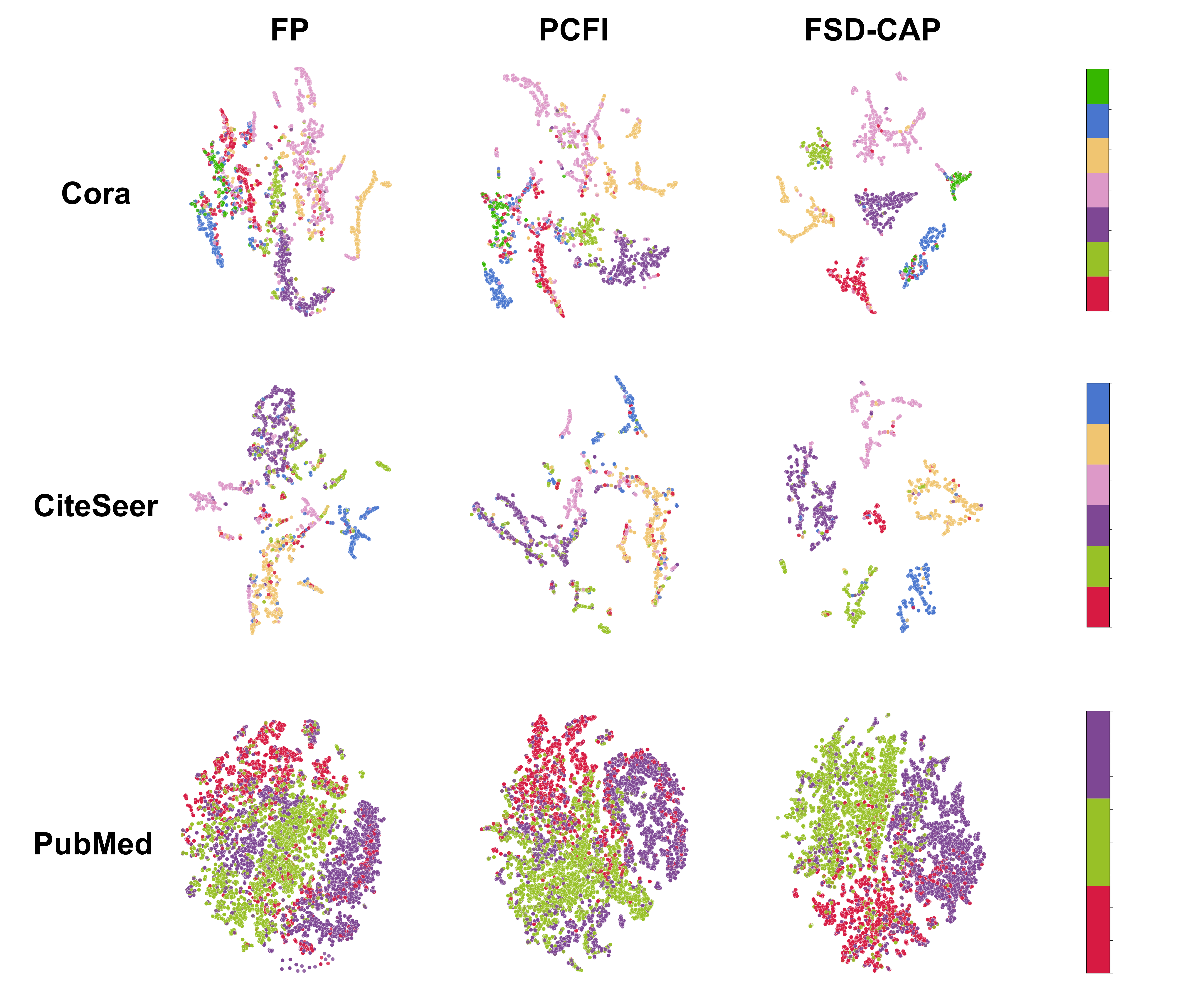}
    \caption{t-SNE visualizations of imputed node features on Cora, CiteSeer, and PubMed under the structural missing setting with a $99.5\%$ missing rate. Nodes are colored by ground-truth class labels.}  
    \label{fig:tsne_Cora}
    \vspace{-0.1in}
\end{figure}

\begin{figure}[htb!]
    \centering
    \includegraphics[width=1\linewidth, trim={0.2cm 0 2cm 0},clip]{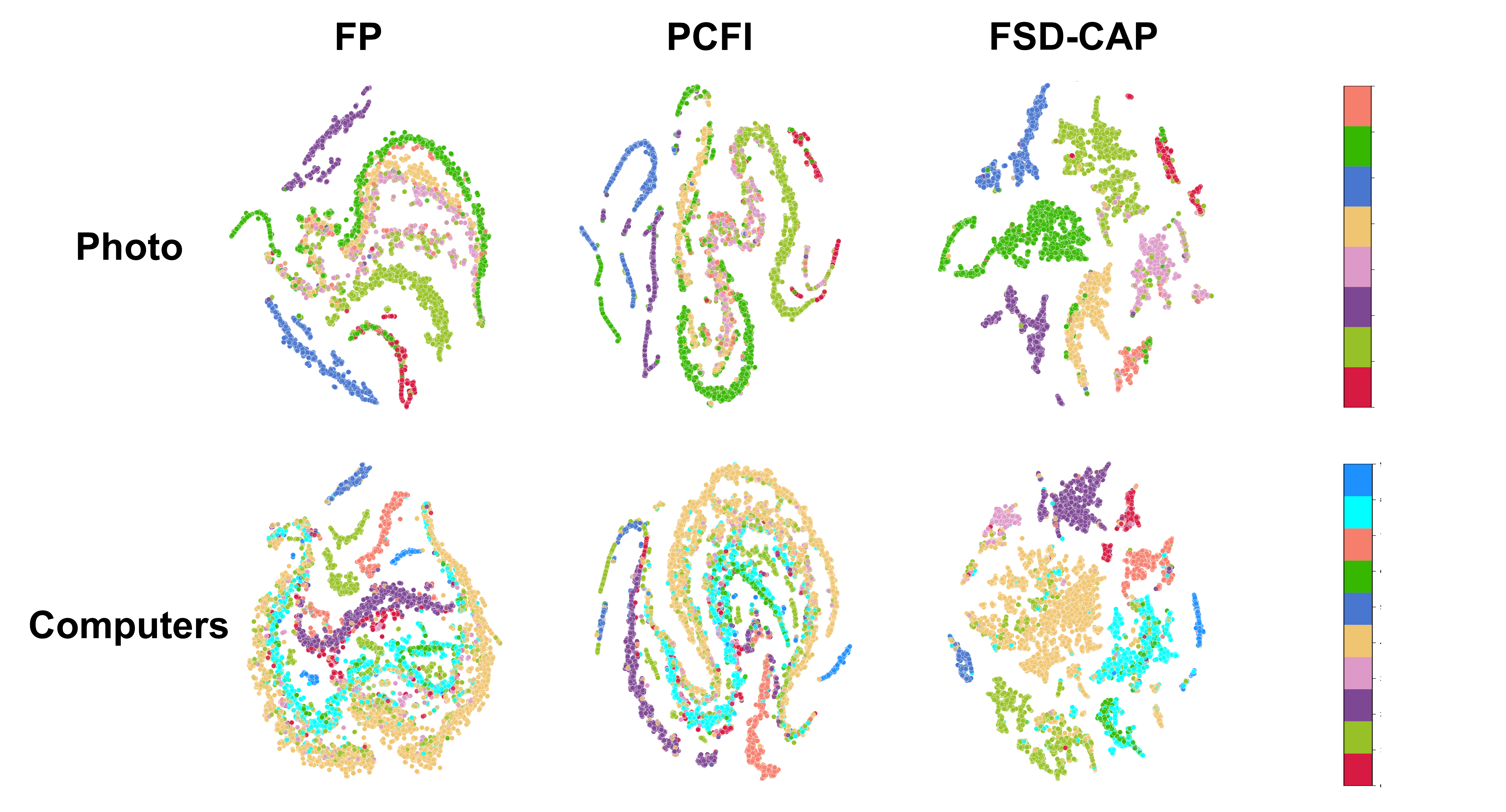}
    \caption{t-SNE visualizations of imputed node features on Photo and Computers under the structural missing setting with a $99.5\%$ missing rate. Nodes are colored by ground-truth class labels.}  
    \label{fig:tsne_Photo}
    \vspace{-0.1in}
\end{figure}

\subsubsection{Complexity Analysis}

\paragraph{Computational complexity.}
FSD-CAP consists of two stages: fractional subgraph diffusion (FSD) and class-aware propagation (CAP).

To make the cost concrete, we report wall-clock time (in seconds), including both the time for feature imputation and training of downstream GCN for FSD-CAP and baseline models in the Table~\ref{tab:time complexity}. We measure a single split per dataset under a structural missing setting with a 99.5\% missing rate. All experiments are run on a 40-core Intel Xeon Silver 4210R CPU with one NVIDIA RTX 4090 GPU.

As shown in the table, deep learning-based methods incur significantly higher computational costs compared to diffusion-based imputation approaches. For FSD-CAP, the vast majority of the time is spent on training the downstream model. The actual feature completion stage, in contrast, is highly efficient, requiring only 0.38s on Cora, 0.76s on CiteSeer, 0.75s on PubMed, 0.57s on Photo, and 1.09s on Computers. Among diffusion-based models, FSD-CAP requires more imputation time than FP and PCFI, yet the cost remains moderate. The extra time comes from the multi step subgraph expansion, which improves reconstruction under extreme missingness. With only 0.5\% of features available, FSD-CAP improves over PCFI by 1.80\% to 5.88\% on Cora, CiteSeer, PubMed, Photo, and Computers, with an average gain of 3.67\%. This yields a favorable trade-off between accuracy and efficiency and is practical for real-world settings with severe feature scarcity.

\begin{table}[htb!]
  \caption{Training time (in seconds) of methods under a structural missing setting with 99.5\% missing rate.}
  \label{tab:time complexity}
  \centering
  \setlength{\tabcolsep}{3pt}

  \begin{tabular}{c|ccccc}
    \toprule
    \multicolumn{1}{c}{}
    & Cora & CiteSeer & PubMed & Photo & Computers \\
    \midrule

    PaGCN     & 3.442s     & 6.035s     & 11.297s   & 12.643s    & 26.635s   \\
    GRAFENNE     & 78.285s     & 30.133s     & 225.987s   & 160.739s    & 223.767s    \\
    ASD-VAE     & 328.785s     & 132.977s     & OOM   & 468.35s    & OOM  \\
    FP     & 2.251s     & 2.282s     & 2.911s   & 3.267s    & 7.545s \\
    PCFI     & 2.959s     & 2.797s     & 3.763s   & 3.769s    & 8.327s  \\
    FSD-CAP     & 3.276s     & 3.103s     & 4.022s   & 4.329s    & 8.992s  \\
    
    \bottomrule
  \end{tabular}  
\end{table}

\paragraph{Memory complexity.}
Table~\ref{tab:space complexity} reports the actual memory usage (in MB) of FSD-CAP for both semi-supervised node classification and link prediction under the structural missing setting with a $99.5\%$ missing rate. Results are shown across five datasets, along with the average memory consumption for each task.

\begin{table}[htb!]
  \caption{Memory usage (in MB) of FSD-CAP for semi-supervised node classification and link prediction under the structural missing setting with a $99.5\%$ feature missing rate.}
  \label{tab:space complexity}
  \centering
  \setlength{\tabcolsep}{6pt}
  \fontsize{9}{12}\selectfont
  \begin{tabular}{c|ccccc|c}
    \toprule
    \multicolumn{1}{c|}{}
    & Cora & CiteSeer & PubMed & Photo & Computers & Average\\
    \midrule
    Semi-supervised node classification     & 690     & 816     & 3638   & 1014    & 2044 &  1640.4 \\
    Link prediction     & 658     & 876     & 2560   & 1108    & 2274 & 1495.2 \\
    \bottomrule
  \end{tabular}
\end{table}

\subsubsection{Accuracy Comparison On Large-Scale Datasets}
\label{sec:large dataset}
We include results on the large-scale OGBN-ArXiv dataset and a subgraph with 50,000 nodes extracted from OGBN-Products . The OGBN-ArXiv graph has 169,343 nodes, 1,166,234 edges, and 128 dimensional features.

We evaluate node classification accuracy under both structural and uniform missing settings. As shown in Table~\ref{tab:LargeDatasets_node_cls}, several deep learning models, including GRAFENNE, ITR, and ASD-VAE, run out of memory on two graphs. FSD-CAP attains the best performance among all evaluated methods in both settings. It reaches 69.11\% under structural missingness and 70.16\% under uniform missingness, outperforming FP and PCFI. In the uniform case, the gap to the full feature model at 72.27\% is 2.11 percentage points. These results indicate that the framework scales to large graphs and maintains strong reconstruction quality at extreme sparsity.

\begin{table}[htb!]
  \caption{Accuracy comparison on OGBN-Arxiv and OGBN-Products for node classification with 99.5\% missing rate.}
  \label{tab:LargeDatasets_node_cls}
  \centering
  \setlength{\tabcolsep}{3pt}
  \begin{tabular}{c|cc|cc}
    \toprule
    \multicolumn{1}{c|}{}
    & \multicolumn{2}{c}{OGBN-Arxiv} & \multicolumn{2}{c}{OGBN-Products} \\
    \cmidrule(lr){2-3} \cmidrule(lr){4-5}
    \multicolumn{1}{c|}{}
    & structural missing            & uniform missing             & structural missing            & uniform missing             \\
    \midrule
    Full Features  & 72.27 ± 0.11                  & 72.27 ± 0.11                  & 69.04 ± 0.07                  & 69.04 ± 0.07                  \\
    Zero           & 56.36 ± 0.63                  & 57.70 ± 0.70                  & 51.66 ± 1.30                  & 58.70 ± 0.70                  \\
    PaGCN          & 47.46 ± 0.51                  & 56.92 ± 0.34                  & 55.00 ± 0.55                  & 56.00 ± 0.56                  \\
    GRAFENNE       & OOM                           & OOM                           & OOM                           & OOM                           \\
    ITR            & OOM                           & OOM                           & OOM                           & OOM                           \\
    ASD-VAE        & OOM                           & OOM                           & OOM                           & OOM                           \\
    FP             & 68.13 ± 0.41                  & 68.52 ± 0.18                  & 67.53 ± 0.64                  & 68.66 ± 0.35                  \\
    PCFI           & 68.59 ± 0.29                  & 69.39 ± 0.29                  & 68.18 ± 0.30                  & 68.39 ± 0.29                  \\
    \cellcolor{lightred}\textbf{FSD-CAP}    & \cellcolor{lightred}\textbf{69.11 ± 0.33}              & \cellcolor{lightred}\textbf{70.16 ± 0.27}              & \cellcolor{lightred}\textbf{69.03 ± 0.41}              & \cellcolor{lightred}\textbf{69.26 ± 0.27}              \\
    \bottomrule
  \end{tabular}  
\end{table}

\subsubsection{Accuracy Comparison On Heterophily Datasets}
\label{sec:heterophily datasets}
We evaluate heterophily explicitly. The benchmarks are Cornell, Texas, and Wisconsin from WebKB, and Chameleon from Wikipedia. Their edge homophily ratios are 0.30, 0.11, 0.21, and 0.23, defined as the fraction of edges that connect nodes of the same class.

We test node classification at a 99.5\% missing rate under structural and uniform masks. We compare with diffusion-based completion methods FP and PCFI, and with two heterophil-oriented GNNs, Hop GNN and GPR GNN, under the same settings.

As shown in Table~\ref{tab:HeterophilyDatasets_node_cls}, when features are extremely sparse, the heterophily GNNs degrade, while completion first helps by reconstructing informative representations. Under structural missingness, Hop GNN averages 44.97\%, and FSD-CAP reaches 56.22\%. Under uniform missingness, FSD-CAP averages 58.85\% and outperforms all baselines.

These findings are consistent with our design. Progressive subgraph diffusion limits early cross-class leakage, and fractional diffusion allows milder mixing on weak homophily. The class-aware refinement then pulls uncertain nodes toward class prototypes and downweights boundary nodes by entropy. Together, the table demonstrates that the framework generalizes well to weak homophily.

\begin{table}[htb!]
  \caption{Accuracy on heterophily graphs for semi-supervised node classification task at 99.5\% missing rate. The best results are highlighted in bold.}
  \label{tab:HeterophilyDatasets_node_cls}
  \centering
  \setlength{\tabcolsep}{3pt}
  
  \textbf{ Structural Missing}
  \begin{tabular}{c|cccc|c}
    \toprule
    \multicolumn{1}{c|}{}
    & Chameleon & Texas & Cornell & Wisconsin & Average\\
    \midrule
    Hop-GNN     & 26.65\% ± 9.66     & 57.11\% ± 6.56     & 47.11\% ± 6.56     & 49.02\% ± 5.95 & 44.97\% \\
    GPR-GNN     & 21.90\% ± 1.75     & 58.42\% ± 6.09    & 48.42\% ± 6.09    & 46.08\% ± 5.49    & 43.70\% \\
    FP     & 39.40\% ± 2.97    & 61.07\% ± 8.81   & 55.36\% ± 12.12    & 55.53\% ± 7.20 & 52.84\% \\
    PCFI     & 41.44\% ± 7.32    & 64.00\% ± 8.11    & 55.17\% ± 12.04    & 55.32\% ± 7.46 & 53.98\% \\
    \cellcolor{lightred}{FSD-CAP}     & \cellcolor{lightred}\textbf{43.23\% ± 8.30}    & \cellcolor{lightred}\textbf{66.79\% ± 6.60}    & \cellcolor{lightred}\textbf{57.24\% ± 11.14}    & \cellcolor{lightred}\textbf{57.63\% ± 9.00}    &  \cellcolor{lightred}\textbf{56.22\%} \\

    \bottomrule
  \end{tabular}
  \vspace{5pt}
  
  \textbf{ Uniform Missing}
  \begin{tabular}{c|cccc|c}
    \toprule
    \multicolumn{1}{c|}{}
    & Chameleon & Texas & Cornell & Wisconsin & Average\\
    \midrule
    Hop-GNN     & 25.19\% ± 6.75     & 57.11\% ± 6.56     & 47.11\% ± 6.56     & 49.02\% ± 5.95 & 44.61\% \\
    GPR-GNN     & 23.00\% ± 1.68     & 58.42\% ± 6.09    & 48.42\% ± 6.09    & 46.08\% ± 5.49    & 43.98\% \\
    FP     & 40.36\% ± 4.47    & 61.43\% ± 8.57   & 55.86\% ± 12.12    & 58.16\% ± 4.92 & 53.95\% \\
    PCFI     & 40.54\% ± 4.08    & 62.50\% ± 9.75    & 55.86\% ± 12.12    & 57.37\% ± 6.21 & 54.07\% \\
    \cellcolor{lightred}{FSD-CAP}     & \cellcolor{lightred}\textbf{47.84\% ± 5.21}    & \cellcolor{lightred}\textbf{71.07\% ± 8.06}    & \cellcolor{lightred}\textbf{57.55\% ± 11.66}    & \cellcolor{lightred}\textbf{58.95\% ± 5.79}    &  \cellcolor{lightred}\textbf{58.85}\% \\

    \bottomrule
  \end{tabular}
\end{table}

\subsubsection{Robustness Analysis against Different Data Missing Levels}
\label{sec:robustness analysis}
To evaluate the robustness of FSD-CAP under varying levels of feature missingness, we perform node classification experiments on five datasets under both structural and uniform missing settings. The missing rate is gradually increased from $60\%$ to $99.5\%$, and results are compared to the full-feature setting to assess how performance degrades with increased sparsity. Table~\ref{tab:FSD-CAP_All_node_cls} reports the classification accuracy at each missing rate, along with the relative change compared to the fully observed setting.

\begin{table}[htb!]
\caption{Node classification accuracy ($\%$) of FSD-CAP at different missing rates. Relative performance changes are reported with respect to the fully observed (full-feature) setting.}
\label{tab:FSD-CAP_All_node_cls}
\centering
\setlength{\tabcolsep}{3pt}
\textbf{\scriptsize Structural Missing}

\scalebox{0.68}{
\begin{tabular}{c|c| c c c c c c}
\toprule
\multicolumn{1}{c|}{Dataset}
 & Full Features & $60\%$ Missing & $70\%$ Missing & $80\%$ Missing & $90\%$ Missing & $95\%$ Missing & $99.5\%$ Missing \\ 
\midrule
Cora & 82.72\% & 82.75\% (+0.03\%) & 82.77\% (+0.02\%) & 82.28\% (-0.44\%) & 82.45\% (-0.27\%) & 82.01\% (-0.71\%) & 80.56\% (-0.16\%) \\
CiteSeer & 70.00\% & 71.29\% (+1.29\%) & 71.19\% (+1.19\%) & 71.40\% (+1.40\%) & 71.97\% (+1.97\%) & 72.85\% (+2.85\%) & 71.94\% (+1.94\%) \\
PubMed & 77.46\% & 77.13\% (-0.33\%) & 76.96\% (-0.50\%) & 76.89\% (-0.57\%) & 77.22\% (-0.24\%) & 77.19\% (-0.27\%) & 76.98\% (-0.48\%) \\
Photo & 91.63\% & 91.67\% (+0.04\%) & 91.58\% (-0.05\%) & 91.34\% (-0.29\%) & 90.79\% (-0.84\%) & 90.37\% (-1.26\%) & 89.18\% (-2.45\%) \\
Computers & 84.72\% & 85.00\% (+0.28\%) & 84.70\% (-0.02\%) & 84.13\% (-0.59\%) & 84.08\% (-0.64\%) & 84.17\% (-0.55\%) & 81.64\% (-3.08\%)  \\
\midrule
Average & 81.31\% & 81.57\% (+0.26\%) & 81.44\% (+0.13\%) & 81.21\% (-0.10\%) & 81.30\% (-0.01\%) & 81.32\% (+0.01\%) & 80.06\% (-1.25\%)  \\
\bottomrule
\end{tabular}}

\vspace{5pt}

\textbf{\scriptsize Uniform Missing}

\scalebox{0.68}{
\begin{tabular}{c|c| c c c c c c c c}
\toprule
\multicolumn{1}{c|}{Dataset}
 & Full Features & $60\%$ Missing & $70\%$ Missing & $80\%$ Missing & $90\%$ Missing & $95\%$ Missing & $99.5\%$ Missing \\ 
\midrule
Cora & 82.72\% & 82.84\% (+0.12\%) & 82.61\% (-0.11\%) & 82.63\% (-0.09\%) & 82.61\% (-0.11\%) & 82.39\% (-0.33\%) & 81.49\% (-1.23\%) \\
CiteSeer & 70.00\% & 71.34\% (+1.34\%) & 71.31\% (+1.31\%) & 72.21\% (+2.21\%) & 72.26\% (+2.26\%) & 72.61\% (+2.61\%) & 73.15\% (+3.15\%) \\
PubMed & 77.46\% & 77.33\% (-0.13\%) & 77.21\% (-0.25\%) & 77.38\% (-0.08\%) & 77.33\% (-0.13\%) & 76.96\% (-0.50\%) & 77.46\% (-0.00\%) \\
Photo & 91.63\% & 91.63\% (-0.00\%) & 91.39\% (-0.24\%) & 91.02\% (-0.61\%) & 90.77\% (-0.86\%) & 90.37\% (-1.26\%) & 89.40\% (-2.23\%) \\
Computers & 84.72\% & 84.54\% (-0.18\%) & 84.83\% (+0.11\%) & 84.30\% (-0.42\%) & 84.38\% (-0.34\%) & 84.14\% (-0.58\%) & 83.57\% (-1.15\%) \\
\midrule
Average & 81.31\% & 81.54\% (+0.23\%) & 81.47\% (+0.16\%) & 81.51\% (+0.20\%) & 81.47\% (+0.16\%) & 81.29\% (-0.02\%) & 81.01\% (-0.30\%) \\
\bottomrule
\end{tabular}}
\end{table}

In the structural missing setting, FSD-CAP maintains or exceeds the performance of the full-feature setting when the missing rate is below $95\%$, demonstrating its ability to handle substantial feature loss. When the missing rate reaches $99.5\%$, with only $0.5\%$ of features observed, the average accuracy decreases by just $1.25\%$ relative to the fully observed case. In the uniform missing setting, the performance drop is even smaller. At a $99.5\%$ missing rate, the average decline is only $0.3\%$, confirming the robustness of FSD-CAP under extreme sparsity. On the CiteSeer dataset, FSD-CAP consistently outperforms the full-feature setting across all tested missing rates. Under the $99.5\%$ uniform missing condition, it improves accuracy by $3.15\%$, indicating that the reconstructed features can be more beneficial for downstream tasks than the original input. These results show that FSD-CAP remains effective and reliable across a wide range of missing levels and patterns.

\subsubsection{Accuracy Comparison with SOTA }
We further evaluate the performance of FSD-CAP against the current state-of-the-art method (PCFI) on node classification accuracy at different missing rates across five datasets, results are shown in Table \ref{tab:FSD-CAP_PCFI}. FSD-CAP consistently outperforms PCFI across all missing patterns. Notably, as the missing rate increases, the performance gap between FSD-CAP and PCFI becomes more pronounced. Specifically, when the missing rate reaches $99.5\%$, FSD-CAP achieves an average accuracy improvement of $3.67\%$ across all datasets under the structure missing setting, and $2.24\%$ under the uniform missing setting. On CiteSeer dataset with $99.5\%$ structural missing rate, FSD-CAP achieves a performance gain of $5.88\%$, demonstrating its robustness to high levels of feature incompleteness compared to PCFI.

Compared to denser graphs such as Photo and Computers, FSD-CAP shows more significant improvements on the sparser datasets (Cora, CiteSeer, and PubMed) with highest gains observed on the Cora dataset. As revealed by the ablation study in Appendix \ref{sec:ablation study}, the second-stage class-aware propagation mechanism plays a critical role in enhancing performance, particularly in sparse graph scenarios. By incorporating class-level feature information, this module effectively enhances inter-class discrimination, thereby improving classification accuracy. In contrast, while the fractional subgraph diffusion stage offers certain improvements over conventional symmetrically normalized diffusion, its overall contribution to performance is relatively modest. On denser graph with better connectivity, the FSD stage benefits performance by emphasizing local neighborhood influence through a higher fractional exponent $\gamma$, leading to more localized diffusion and improved model behavior.

\begin{table}[htb!]
\caption{Node classification accuracy ($\%$) of PCFI and FSD-CAP under different feature missing rates. The performance changes of FSD-CAP relative to PCFI are given in parentheses.}
\label{tab:FSD-CAP_PCFI}
\centering
\setlength{\tabcolsep}{2pt}
\textbf{\scriptsize Structural Missing}

\scalebox{0.57}{
\begin{tabular}{c|cc|cc|cc|cc|cc|cc}
\toprule
\multirow{2}{*}{Dataset} & \multicolumn{2}{c|}{$60\%$ Missing} & \multicolumn{2}{c|}{$70\%$ Missing} & \multicolumn{2}{c|}{$80\%$ Missing}& \multicolumn{2}{c|}{$90\%$ Missing}& \multicolumn{2}{c|}{$95\%$ Missing}& \multicolumn{2}{c}{$99.5\%$ Missing}\\
\cmidrule{2-13}
 & PCFI & FSD-CAP & PCFI & FSD-CAP & PCFI & FSD-CAP & PCFI & FSD-CAP & PCFI & FSD-CAP & PCFI & FSD-CAP  \\
\midrule
Cora & 80.57\% & 82.75\%(+2.18\%) & 80.01\% & 82.77\%(+2.76\%) &79.36\%  &82.28\%(+2.92\%) &78.88\% &82.45\%(+3.57\%) &77.84\% &82.01\%(+4.17\%) &75.36\% &80.56\%(+5.20\%) \\
CiteSeer &70.10\% &71.29\%(+1.19\%) &69.69\% &71.19\%(+1.50\%) &70.21\% &71.40\%(+1.19\%) &69.76\% &71.97\%(+2.21\%) &69.58\% &72.85\%(+3.27\%) &66.06\% &71.94\%(+5.88\%) \\
PubMed &76.03\%&77.13\%(+1.10\%) &76.09\% &76.96\%(+0.87\%) &75.80\% &76.89\%(+1.09\%) &75.90\% &77.22\%(+1.32\%) &76.07\% &77.19\%(+1.12\%) &74.44\% &76.98\%(+2.54\%)  \\
Photo &91.41\%&91.67\%(+0.16\%) &90.88\% &91.58\%(+0.70\%) &90.42\% &91.34\%(+0.92\%) &89.44\% &90.79\%(+1.35\%) &89.09\% &90.37\%(+1.28\%) &87.38\% &89.18\%(+1.80\%)  \\ 
Computers &84.91\%&85.00\%(+0.09\%) &83.63\% &84.70\%(+1.07\%) &83.15\% &84.13\%(+0.98\%) &82.40\% &84.08\%(+1.68\%) &81.90\% &84.17\%(+2.27\%) &78.71\% &81.64\%(+2.93\%)  \\ 
\midrule
Average & 80.62\%&81.57\%(+0.94\%) &80.06\% &81.44\%(+1.38\%) &79.79\% &81.21\%(+1.42\%) & 79.28\% &81.30\%(+2.06\%) & 78.90\% &81.32\%(+2.42\%) &76.39\% &80.06\%(+3.67\%) \\ 

\bottomrule
\end{tabular}}

\vspace{5pt}

\textbf{\scriptsize Uniform Missing}

\scalebox{0.57}{
\begin{tabular}{c|cc|cc|cc|cc|cc|cc}
\toprule
\multirow{2}{*}{Dataset} & \multicolumn{2}{c|}{$60\%$ Missing} & \multicolumn{2}{c|}{$70\%$ Missing} & \multicolumn{2}{c|}{$80\%$ Missing}& \multicolumn{2}{c|}{$90\%$ Missing}& \multicolumn{2}{c|}{$95\%$ Missing}& \multicolumn{2}{c}{$99.5\%$ Missing}\\
\cmidrule{2-13}
 & PCFI & FSD-CAP & PCFI & FSD-CAP & PCFI & FSD-CAP & PCFI & FSD-CAP & PCFI & FSD-CAP & PCFI & FSD-CAP  \\
\midrule
Cora&81.01\%&82.84\%(+1.83\%)&80.25\%&82.61\%(+2.36\%)&80.19\%&82.63\%(+2.44\%)&79.55\%&82.61\%(+3.06\%)&78.59\%&82.39\%(+3.80\%)&78.55\%&81.49\%(+2.94\%)\\
CiteSeer&71.10\%&71.34\%(+0.24\%)&69.97\%&71.31\%(+1.34\%)&70.13\%&72.21\%(+2.08\%)&69.92\%&72.26\%(+2.34\%)&68.85\%&72.61\%(+3.76\%)&69.11\%&73.15\%(+4.04\%)\\
PubMed&76.29\%&77.33\%(+1.04\%)&75.90\%&77.21\%(+1.31\%)&76.18\%&77.38\%(+1.20\%)&76.58\%&77.33\%(+0.75\%)&76.26\%&76.96\%(+0.70\%)&76.01\%&77.46\%(+1.45\%)\\
Photo&90.88\%&91.63\%(+0.75\%)&90.97\%&91.39\%(+0.42\%)&90.42\%&91.02\%(+0.60\%)&89.73\%&90.77\%(+1.04\%)&89.08\%&90.37\%(+1.29\%)&88.55\%&89.40\%(+0.85\%)\\
Computers&83.62\%&84.54\%(+0.92\%)&83.40\%&84.83\%(+1.43\%)&83.39\%&84.30\%(+0.91\%)&83.14\%&84.38\%(+1.24\%)&82.42\%&84.14\%(+1.72\%)&81.64\%&83.57\%(+1.93\%)\\
\midrule
Average&80.58\%&81.54\%(+0.96\%)&80.10\%&81.47\%(+1.37\%)&80.06\%&81.51\%(+1.45\%)&79.78\%&81.47\%(+1.69\%)&79.04\%&81.29\%(+2.25\%)&78.77\%&81.01\%(+2.24\%)\\

\bottomrule
\end{tabular}}
\vspace{-0.15in}
\end{table}

\black

\subsection{ Implementation and hyperparameters}

\subsubsection{Datasets}
\label{sec:datasets}
We evaluate FSD-CAP on five benchmark datasets: three citation networks (\textbf{Cora}, \textbf{CiteSeer}, and \textbf{PubMed}) and two Amazon co-purchase networks (\textbf{Photo} and \textbf{Computers}).
\begin{itemize}
    \item In the citation networks, nodes represent academic papers and edges indicate citation links. Node features are bag-of-words representations of paper abstracts, and each node is labeled according to its research topic.
    \item In the Amazon co-purchase networks, nodes correspond to products and edges connect items frequently purchased together. Node features are derived from bag-of-words encodings of product reviews, and labels reflect product categories.
\end{itemize}
All datasets are publicly available through the MIT-licensed PyTorch Geometric library. We evaluate on the largest connected component of each graph. Dataset statistics are provided in Table~\ref{dataset-table}, use the setup in FP and PCFI.

\begin{table}[htb!]
  \caption{Dataset statistics and data splits used for semi-supervised node classification.}
  \label{dataset-table}
  \centering
  \begin{tabular}{lccccc}
    \toprule
    \textbf{Dataset}     & \textbf{Nodes}     & \textbf{Edges}     & \textbf{Attributes}     & \textbf{Classes}     & \textbf{Train/Valid/Test Nodes} \\
    \midrule
    Cora     & 2,485     & 5,069     & 1,433     & 7     & 140/1,360/985 \\
    CiteSeer     & 2,120     & 3,679     & 3,703     & 6     & 120/1,380/620 \\
    PubMed     & 19,717     & 44,324     & 500     & 3     & 60/1,440/18,217 \\
    Photo     & 7,487     & 119,043     & 745     & 8     & 160/1,340/5,987 \\
    Computers     & 13,381     & 245,778     & 767     & 10     & 200/1,300/11,881 \\
    \bottomrule
  \end{tabular}
\end{table}

\subsubsection{Data Split}

For node classification, we follow a two-step data split. First, 1500 nodes are randomly selected as the development set, with the remainder used for testing. Within the development set, 20 nodes per class are randomly sampled for training, and the rest form the validation set, following the setup in FP and PCFI. Table~\ref{dataset-table} summarizes the data split for each dataset.

For link prediction, we adopt the edge split protocol from PCFI, allocating $85\%$ of edges for training, $5\%$ for validation, and $10\%$ for testing. To ensure robust evaluation, all experiments are repeated with $10$ random seeds. For each seed, we generate independent data splits and feature mask matrices $M$ to simulate missing attributes. The final results are reported as the mean and standard deviation over these $10$ runs.

The feature mask matrix $M$ is initialized as an all-ones matrix with the same shape as the original feature matrix. Under uniform missing, $m\%$ of the entries are randomly set to zero. Under structural missing, $m\%$ of the rows are randomly selected and set entirely to zero, simulating nodes with fully missing features.

\subsubsection{Baseline Details}
We compare FSD-CAP with the following methods:

\textbf{i. Zero.} A standard GCN trained on the feature matrix with missing values replaced by zeros. We use the implementation from the PyTorch Geometric library (MIT License).

\textbf{ii. PaGCN.} PaGCN applies partial graph convolution over observed features without explicitly modeling missingness. We use the authors’ MIT-licensed code.\footnote{\url{https://github.com/yaya1015/PaGCN}}

\textbf{iii. ITR.} ITR performs adaptive imputation through a two-stage process: initialization from graph structure followed by refinement using affinity updates. We use the Apache-2.0 licensed implementation.\footnote{\url{https://github.com/WxTu/ITR}}

\textbf{iv. ASD-VAE.} ASD-VAE learns a shared latent space by maximizing the joint likelihood of attribute and structure views, then reconstructs features via decoupled decoding. We use the authors’ publicly released code.\footnote{\url{https://github.com/jiangxinke/ASD-VAE}}

\textbf{v. FP.} FP reconstructs features through iterative propagation using the normalized adjacency matrix. We use the Apache-2.0 licensed code.\footnote{\url{https://github.com/twitter-research/feature-propagation}}

\textbf{vi. PCFI.} PCFI imputes missing values by performing inter-node and inter-channel diffusion weighted by pseudo-confidence. We use the official Apache-2.0 licensed implementation.\footnote{\url{https://github.com/daehoum1/pcfi}}

\textbf{vii. GRAFENNE.} GRAFENNE uses a three-phase message-passing framework on an allotropically transformed graph to learn from streaming features. We use the authors’ released code.\footnote{\url{https://github.com/data-iitd/Grafenne}}

\subsubsection{Evaluation Metrics}
We evaluate FSD-CAP on two standard graph learning tasks: semi-supervised node classification and link prediction.

\textbf{Semi-supervised Node Classification.} Given a graph with partially labeled nodes, the objective is to predict labels for the unlabeled nodes. Performance is measured by classification accuracy, defined as the proportion of correctly predicted labels across all nodes. Higher accuracy indicates better generalization in the semi-supervised setting.

\textbf{Link Prediction.} This task involves predicting missing edges between node pairs using learned feature representations. We use AUC (area under the ROC curve) and AP (average precision) as evaluation metrics. AUC quantifies the model’s ability to rank existing edges above non-edges, while AP measures the area under the precision-recall curve, reflecting the balance between precision and recall.

\subsubsection{Experimental Settings}\label{sec:Experimental Settings}

\paragraph{Semi-supervised Node Classification.}
All experiments use a consistent data split strategy with $10$ random seeds per dataset. All models, except ASD-VAE and PaGCN, are implemented using a three-layer GCN trained with the Adam optimizer. ASD-VAE uses the Katz-GCN architecture as proposed in its original work, and PaGCN employs a modified GCN designed for incomplete inputs. For methods with reported hyperparameters (ITR, GRAFENNE, ASD-VAE), we follow the original configurations. For the remaining models, the learning rate is selected from $\{0.1, 0.01, 0.005, 0.001, 0.0005\}$, and dropout from $\{0.0, 0.25, 0.5\}$, based on validation performance. All experiments are run on a 40-core Intel Xeon Silver 4210R CPU with 4 NVIDIA RTX 4090 GPUs (24GB each). 

All methods are evaluated under both structural and uniform missing settings across varying missing rates $mr$, except ITR, which is applicable only to structural missing where node features are either fully observed or fully missing.

\paragraph{Link Prediction.}
We adopt the same $10$-split strategy across all datasets. All models are implemented using a two-layer Graph Autoencoder and trained with Adam. Hyperparameters are tuned on the validation set, with the same learning rate and dropout search space used in the node classification task.

\subsubsection{FSD-CAP Implementation}
\label{sec:fsd-cap implementation}
For semi-supervised node classification, we set the GCN learning rate to $0.01$ with dropout $0.5$ on Cora, CiteSeer, and Computers. On PubMed and Photo, the learning rate is $0.005$, with dropout values of $0.5$ and $0.25$, respectively. The pre-trained GCN used for pseudo-label generation adopts the same settings. Following PCFI, the number of propagation steps $K$ is fixed at $100$, sufficient for convergence.

We tune the hyperparameters $\gamma$ (fractional diffusion exponent), $\lambda$ (feature retention coefficient), and $T$ (temperature) using grid search. For sparse datasets (Cora, CiteSeer, PubMed), $\gamma$ is selected from $[0.6, 1.6]$; for dense datasets (Photo, Computers), from $[2.0, 4.0]$, both with a step size of 0.2. $\lambda$ is searched in $[0, 1]$ with step $0.1$. $T$ is selected from $\{1, 5, 25, 100, 250, 1000\}$.

In the link prediction task, the GCN classifier uses the same configuration as above. For the downstream Graph Autoencoder, the dropout rate is fixed at $0.5$. Learning rates are set to $0.005$ for Cora and CiteSeer, $0.1$ for PubMed, and $0.0005$ for Photo and Computers. Hyperparameter tuning follows the same protocol: $\lambda \in [0, 1]$ with step 0.1, $T \in \{0.001, 0.01, 0.1, 1, 5, 10, 25, 50\}$. For $\gamma$, we use $[4.0, 6.0]$ on dense datasets and $[0.6, 1.6]$ on sparse datasets, both with step $0.2$.

We report the selected hyperparameter configurations for semi-supervised node classification and link prediction under a $99.5\%$ feature missing rate in Table~\ref{tab:parameter on nodeCls} and Table~\ref{tab:parameter on linkpred}, respectively. Additional parameter sensitivity analysis is presented in Appendix~\ref{sec:parameter analysis}.

\begin{table}[htb!]
  \caption{Hyperparameter configurations for FSD-CAP on semi-supervised node classification task.}
  \label{tab:parameter on nodeCls}
  \centering
  \setlength{\tabcolsep}{4pt}
  \fontsize{9}{11}\selectfont
  \begin{tabular}{c|c|ccccc}
    \toprule
      & Parameter & Cora & CiteSeer & PubMed & Photo & Computers\\
    \midrule
    
    \multirow{3}{*}{Structural Missing} & $\gamma$ & 1.2 & 1.2 & 1.6 & 2.8 & 3.8 \\
    & $\lambda$ & 0.2 & 0.9 & 0.6 & 0.3 & 0.4 \\
    & $T$ & 5 & 250 & 5 & 25 & 100 \\
    \hline
    \multirow{3}{*}{Uniform Missing} & $\gamma$ & 1.2 & 1.2 & 1.2 & 2.8 & 4.0 \\
    & $\lambda$ & 0.2 & 0.7 & 0.1 & 0.0 & 0.3 \\
    & $T$ & 250 & 250 & 5 & 25 & 100 \\
    
    \bottomrule
  \end{tabular}
\end{table}

\begin{table}[htb!]
  \caption{Hyperparameter configurations for FSD-CAP on link prediction task.}
  \label{tab:parameter on linkpred}
  \centering
  \setlength{\tabcolsep}{4pt}
  \fontsize{9}{11}\selectfont
  \begin{tabular}{c|c|ccccc}
    \toprule
      & Parameter & Cora & CiteSeer & PubMed & Photo & Computers\\
    \midrule
    
    \multirow{3}{*}{Structural Missing} & $\gamma$ & 1.4 & 1.4 & 1.4 & 4.2 & 5.0 \\
    & $\lambda$ & 0.2 & 0.3 & 0.0 & 0.8 & 0.0 \\
    & $T$ & 5 & 25 & 10 & 0.001 & 0.001 \\
    \hline
    \multirow{3}{*}{Uniform Missing} & $\gamma$ & 1.2 & 1.6 & 1.6 & 4.4 & 5.6 \\
    & $\lambda$ & 0.0 & 0.0 & 0.0 & 0.0 & 0.0 \\
    & $T$ & 5 & 25 & 25 & 0.01 & 0.001 \\
    
    \bottomrule
  \end{tabular}
\end{table}
The implementation of FSD-CAP (Apache-2.0 licensed) will be made publicly available upon publication.

\subsubsection{Additional Discussion}
\paragraph{Limitations.}

This work  focuses solely on missing node attributes and does not address missing or uncertain edges. In many practical settings, both node features and graph topology may be incomplete. Extending FSD-CAP to handle joint feature and structure imputation remains an open and important direction for future research.

\paragraph{Societal Impacts.}
FSD-CAP improves the robustness of GNNs under high feature-missing rates and may benefit applications in domains such as social networks and recommender systems. However, as with any imputation technique, there is a risk of misuse, particularly in inferring sensitive or private attributes from partial data. We encourage responsible use, including proper access controls and ethical oversight, especially when applying the model to contexts involving personal or sensitive information.

\subsection{Supplementary Figures and Tables}

\subsubsection{Accuracy on PubMed, Photo and Computers under varying missing rates.}
\label{sec:accuracy on PubMed, Photo and Computers}
This subsection presents the classification accuracy results for the PubMed, Photo and Computers datasets under structural and uniform missing scenarios, with missing rates (mr) ranging from $0.6$ to $0.995$. The corresponding figure is shown in Figures~\ref{fig:node_Photo_PubMed}.

\begin{figure}[t!]
    \centering
    \includegraphics[width=1\linewidth]{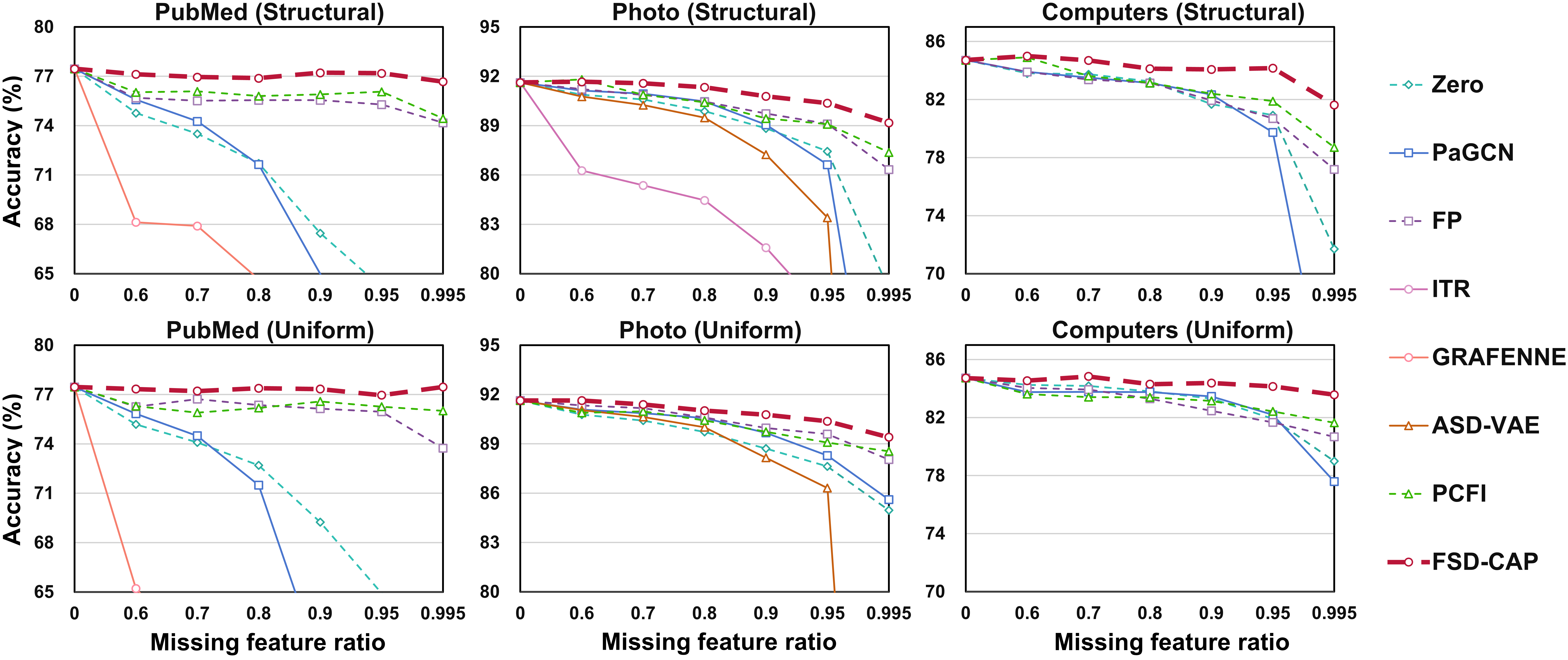}
    \caption{Node classification accuracy ($\%$) on PubMed, Photo and Computers for $mr \in \{0.6, 0.7, 0.8, 0.9, 0.95, 0.995\}$. The top row shows structural missing; the bottom row shows uniform missing. Methods that are inapplicable or result in out-of-memory errors are omitted.}
    \label{fig:node_Photo_PubMed}
\end{figure}

\subsection{Declaration of LLM usage}
The use of large language models (LLMs) is only for editing purposes such as grammar, spelling, and formatting checks, and does not influence the core methodology, scientific rigor, or originality of this research. LLMs are not involved in the design, analysis, or interpretation of the study.

\subsection{Algorithom Description of Fractional Subgraph Diffusion (FSD)}

\begin{algorithm}
\caption{FSD algorithm}
\begin{algorithmic}[1] 
\REQUIRE Graph $\mathcal{G} = \{X, A\}$,
    Known/unknown node sets $\mathcal{V}_{+/-}^{\ell}$ for each channel $\ell$, Binary mask matrix $M$, True labels $y$, Number of channels $F$ and node $N$,
    Hyper-parameters ($\gamma$, K, $\lambda$)
\FOR{channel $\ell = 1,\dots,F$}
    \STATE \textbf{Initialize} sub-graph $\mathcal{G}^{(0)}$: 
        \STATE \quad Keep starting $m=0$ \ \ \textit{(Initialize the number of sub-layers)}
        \STATE \quad Read subgraphs $A^{(m)} = $k-hopSubgraph$(\mathcal{V}_+^\ell ,A, m) $ \ \ (\textit{Extract $A$ of $m $- hop subgraph})
    \WHILE{$A^{(m)} \neq A$}
        \STATE Normalize $\bA^{(m)} = D^{-1/2} A^{(m)} D^{-1/2}$ \ \  (\textit{Symmetric normalized adjacency matrix})
        \STATE Fractional weighted matrix $\bA^{\gamma, m}_{ij} = (\bA^{(m)}_{ij})^\gamma/ \Big(\sum_{k=1}^{N} (\bA^{(m)}_{ik})^\gamma \Big)$
        \FOR{$t = 1$ to $K$}
            \STATE $\mathbf{x}^{(m)}(t) = \mathbf{x}^{(m)}(0) \odot M + (\bA^{\gamma, m} \mathbf{x}^{(m)}(t-1) + \lambda \mathbf{x}^{(m-1)}(K) )\odot (1-M)$
        \ENDFOR
        \STATE $m\leftarrow{m+1}$ \ \ (\textit{Enter the next level of subgraph})
        \STATE Extended subgraphs $ A^{(m)} = $k-hopSubgraph$(\mathcal{V}_+^{\ell} , A, m) $
        
    \ENDWHILE
    \STATE $\mathbf{x}^{\ell} = \mathbf{x}^{(m)}(K)$ \ \ (\textit{Update final features of the current channel})
\ENDFOR
\STATE  Stack $\{\mathbf{x}^{\ell}\}_{\ell=1}^F$ to form pre-imputed $\tilde{X}$
\STATE \textbf{return} Pre-imputed feature matrix $\tilde{X}$
\end{algorithmic}
\end{algorithm}

\subsection{Algorithm Description of Class-Aware Propagation (CAP)}
\begin{algorithm}
\caption{CAP algorithm}
\begin{algorithmic}[1] 
\REQUIRE Graph $\mathcal{G} = \{\tilde{X}, A\}$,
    Known/unknown node sets $\mathcal{V}_{+/-}^{\ell}$ for each channel $\ell$, Binary mask matrix $M$, True labels $y$, Number of node $N$ and classes $C$,  Hyper-parameters $T$

\STATE Get pseudo-labels $\tilde{y}$ and predicted class probability $\hat{y}$ using GNN with $(A, \tilde{X}, y)$ and temperature $T$
\FOR{$i = 1$ to $N$}
    \FOR{$c = 1$ to $C$}
        \STATE $P_i(c) = \left(1/|\hat{\mathcal{N}}_i|\right) \sum_{j\in \mathcal{N}_i} \mathbbm{1}_{(\tilde{y}_j = c)}$
    \ENDFOR
    \STATE information entropy $S_i = - \left(1/\log \left(|\hat{\mathcal{N}}_i|\right)\right) \sum_{c \in C} P_i(c) \cdot \log \left(P_i(c) \right)$
\ENDFOR
\FOR{$c = 1$ to $C$}
    \STATE compute class-specific feature: $x^\star_{(c)} = \big( \sum_{\tilde{y}_i = c} (1 - S_i) \cdot x_i \big)/ \big(\sum_{\tilde{y}_i = c} (1 - S_i)\big)$
    \STATE Define virtual class node $v^{(c)}$ and class subset $\mathcal{V}^{(c)} = \{v^{(c)}\} \cup \mathcal{V}_-^{(c)}  $
    \STATE $X_-^{(c)} = \text{extract\_matrix}(\tilde{X}, \mathcal{V}_-^{(c)}) $ \ \  (\textit{$X_-^{(c)}$ contains imputed features of nodes in $\mathcal{V}_-^{(c)}$})
    
    \STATE Feature Matrix for $\mathcal{V}^{(c)}$: $X^{(c)} = [X_-^{(c)} \;\; x^\star_{(c)}]^T$ \ \  (\textit{row-wise concatenation})
    \STATE Initialize $\mathbf{W}^{(c)}$ as zero matrix of size $|\mathcal{V}^{(c)}| \times |\mathcal{V}^{(c)}|$
    \FOR {each node $i \in \mathcal{V}^{(c)}$}
       \STATE $\mathbf{W}^{(c)}_{ii} = \hat{y}_{i,c}$  \ \  (\textit{self-loop weight: confidence in class $c$})
       \STATE $\mathbf{W}^{(c)}_{i, |\mathcal{V}^{(c)}|} = 1 - \hat{y}_{i,c}$  \ \  (\textit{edge from node $v_i$ to class node $v^{(c)}$})
    \ENDFOR
    \STATE Class Propagation $\hat{X}^{(c)} = \mathbf{W}^{(c)} X^{(c)}$ 
    \STATE Class feature matrix $X_c = \text{extract\_matrix}(\hat{X}^{(c)}, \mathcal{V}_-^{(c)})  $
    \STATE $\hat{X} = \text{combine\_matrix}(\tilde{X},X_c,M)$
    \ \  (\textit{restore known features and update refined features})
\ENDFOR
\STATE \textbf{return}  Imputed feature matrix $\hat{X}$

\end{algorithmic}
\end{algorithm}

\end{document}